\documentclass{article}

\usepackage{microtype}
\usepackage{graphicx}
\usepackage{subcaption}
\usepackage{booktabs} % for professional tables

\usepackage{hyperref}

\usepackage[accepted]{icml2026}

\usepackage{amsmath}
\usepackage{amssymb}
\usepackage{mathtools}
\usepackage{amsthm}

\usepackage{hyperref}
\usepackage{url}
\usepackage{graphicx}
\usepackage{wrapfig}
\usepackage{subcaption}
\usepackage{amsthm}
\usepackage{titletoc}
\usepackage{booktabs}
\usepackage{multirow}
\usepackage{float} 
\usepackage{subcaption}
\usepackage{algorithm}
\usepackage{adjustbox}
\usepackage{etoc}

\usepackage{pgfplots}
\pgfplotsset{compat=1.18}
\usepackage{tikz}
\usetikzlibrary{arrows.meta}
\DeclareMathOperator{\Log}{Log}  

\usepackage[capitalize,noabbrev]{cleveref}

\theoremstyle{plain}
\newtheorem{theorem}{Theorem}[section]
\newtheorem{proposition}[theorem]{Proposition}

\newtheorem{corollary}[theorem]{Corollary}
\theoremstyle{definition}

\newtheorem{assumption}[theorem]{Assumption}
\theoremstyle{remark}

\usepackage[textsize=tiny]{todonotes}

\icmltitlerunning{Density-Aware Translation of Spurious Correlations}

\begin{document}

\twocolumn[
  \icmltitle{Density-Aware Translation of Spurious \\ Correlations in Zero-Shot VLMs}

  % It is OKAY to include author information, even for blind submissions: the
  % style file will automatically remove it for you unless you've provided
  % the [accepted] option to the icml2026 package.

  % List of affiliations: The first argument should be a (short) identifier you
  % will use later to specify author affiliations Academic affiliations
  % should list Department, University, City, Region, Country Industry
  % affiliations should list Company, City, Region, Country

  % You can specify symbols, otherwise they are numbered in order. Ideally, you
  % should not use this facility. Affiliations will be numbered in order of
  % appearance and this is the preferred way.
  \icmlsetsymbol{equal}{*}

  \begin{icmlauthorlist}
    \icmlauthor{Afsaneh Hasanebrahimi}{sch}
    \icmlauthor{Hanxun Huang}{sch}
    \icmlauthor{Christopher Leckie}{sch}
    \icmlauthor{Sarah Erfani}{sch}
    % \icmlauthor{Firstname5 Lastname5}{yyy}
    % \icmlauthor{Firstname6 Lastname6}{sch,yyy,comp}
    % \icmlauthor{Firstname7 Lastname7}{comp}
    % %\icmlauthor{}{sch}
    % \icmlauthor{Firstname8 Lastname8}{sch}
    % \icmlauthor{Firstname8 Lastname8}{yyy,comp}
    %\icmlauthor{}{sch}
    %\icmlauthor{}{sch}
  \end{icmlauthorlist}

  % \icmlaffiliation{yyy}{Department of XXX, University of YYY, Location, Country}
  % \icmlaffiliation{comp}{Company Name, Location, Country}
  \icmlaffiliation{sch}{School of Computing and Information Systems, The University of Melbourne, Victoria, Australia}

  \icmlcorrespondingauthor{Afsaneh Hasanebrahimi}{a.hasanebrahimi@unimelb.edu.au}

  % You may provide any keywords that you find helpful for describing your
  % paper; these are used to populate the "keywords" metadata in the PDF but
  % will not be shown in the document
  \icmlkeywords{Machine Learning, ICML}

  \vskip 0.3in
]

% this must go after the closing bracket ] following \twocolumn[ ...

% This command actually creates the footnote in the first column listing the
% affiliations and the copyright notice. The command takes one argument, which
% is text to display at the start of the footnote. The \icmlEqualContribution
% command is standard text for equal contribution. Remove it (just {}) if you
% do not need this facility.

% Use ONE of the following lines. DO NOT remove the command.
% If you have no special notice, KEEP empty braces:
\printAffiliationsAndNotice{}  % no special notice (required even if empty)
% Or, if applicable, use the standard equal contribution text:
% \printAffiliationsAndNotice{\icmlEqualContribution}

\begin{abstract}
    Vision-Language models (VLMs), such as CLIP, achieve powerful zero-shot classification. However, their predictions remain sensitive to spurious correlations, where contextual cues dominate over semantic content. 
    Earlier solutions typically rely on fine-tuning or prompt engineering, which either undermine the advantages of pre-trained models or are prone to hallucination. %In addition, most approaches are limited to a single modality, increasing the risk of misalignment between text and images. 
    In this work, we propose \textbf{Density-Aware Translation} (DAT) that refines image-text similarity scores using a local geometric density term derived from group reference sets. Our approach is motivated by the phenomenon that CLIP embeddings exhibit a modality gap and lie on an anisotropic shell in the feature space: common patterns cluster near the mean, while rare patterns are pushed outward. This geometry creates uneven alignment, where spurious correlations are amplified while semantically meaningful but rare cues are marginalised. To address this, we employ a relative measure to rescale similarities based on embedding density, suppressing overconfident scores in diffuse regions while preserving dense, semantically consistent matches.
    Experimental results on benchmark datasets demonstrate consistent improvements in worst-group and average accuracy, highlighting density-aware translation as a simple and effective calibration mechanism for reliable zero-shot classification using multimodal models. 
    %We provide a theoretical analysis linking the correction to Bayes-aligned decision boundaries under von Mises–Fisher geometry, and prove that it reduces worst-group surrogate risk. 
\end{abstract}
\section{Introduction}
Vision-Language models (VLMs), such as CLIP \citep{radford2021learning}, have advanced multimodal learning by aligning image and text embeddings in a shared latent space, enabling strong zero-shot capabilities that generalise to unseen classes without requiring additional training \citep{chen2023understanding}. This capability makes them widely used for tasks such as classification \citep{qian2024online}, retrieval \citep{sain2023clip}, and reasoning \citep{subramanian2022reclip} in different domains. 

Despite their remarkable success, VLMs are still susceptible to spurious correlations \citep{bommasani2021opportunities,chuang2023debiasing,varma2024ravl}, where predictions rely on frequent but semantically irrelevant cues rather than meaningful content. 
%These frequent concepts tend to cluster near the average embeddings of each modality \citep{levi2024double}. 
\citet{levi2024double} show that frequently occurring concepts tend to align more closely with the modality mean vector, exhibiting higher conformity, which reflects semantic blurring.
This geometric bias makes the model over-rely on such frequent but uninformative patterns, while down-weighting rarer yet more informative signals, ultimately compromising robustness % and fairness 
across groups. %\citep{wu2024evaluating}. 
For instance, spurious correlations arise in chest X-ray diagnostics, where models trained to detect pneumonia have been shown to rely on hospital-specific markers instead of true lung pathology \citep{zech2018chestxray,degrave2021ai}.

Existing approaches to mitigating spurious correlations in VLMs fall into three groups.
(i) Fine-tuning and adaptors \citep{zhang2022contrastive,goyal2023finetune, varma2024ravl} rely on bias-aware objectives but require labelled supervision, undermining zero-shot generalisation. \citet{wu2023discover} also propose a supervised, concept-aware correction using gradient-based concept discovery with the white-box model access.
(ii) Text-side methods \citep{chuang2023debiasing, trager2023linear, an2024perceptionclip} edit prompts or projections, but risk cross-modal misalignment and often depend on domain expertise or LLMs, which can be unreliable and inconsistent \citep{abbasi2025clip, Huang_2025, molahasani2025prism}.
(iii) Multimodal embedding adjustments \citep{adila2024roboshot, lu2025mitigating} alter image features with text guidance, but linear projections distort geometry and translation-based methods require dataset-specific scaling, calibrated by training data. 

In this work, we revisit the geometric limitations of VLMs such as CLIP, by examining how their similarity scores neglect the anisotropic, ellipsoidal structure of the embedding space. Prior studies have shown that CLIP embeddings are unevenly distributed, with dense and sparse regions emerging along different directions \citep{liang2022mind,levi2024double}. Building on this observation, we propose the \textbf{Density-Aware Translation} (DAT) that incorporates \textit{local geometric information} into similarity computation.
% Specifically, we estimate local density using Smoothed Local Outlier Factor (SLOF) \citep{schubert2014local}, with reference sets constructed via herding sampling \citep{chen2012super} to capture both label and spurious attribute variations. 
In particular, we quantify local geometry via a relative density ratio, estimated from reference sets constructed through sampling to capture variations across both labels and spurious attributes.
This allows us to down-weight spurious similarities that occur when a sample aligns with the wrong prompt, while preserving similarity for samples that lie in dense, representative regions of their correct group. Importantly, our approach operates fully in the zero-shot regime, without access to model parameters, and only relies on a small, balanced reference set to capture group-level density characteristics. %unlike prompt tuning or model adaptation techniques, which can introduce hallucinations or require task-specific knowledge, ours 

From a theoretical perspective, we model group distributions using the Kent (Fisher-Bingham) distribution \citep{kent1982fisher, mardia2009directional}, and show that raw CLIP scores systematically deviate from Bayes-optimal decision boundaries. %Our DAT provably reduces worst-group surrogate risk by aligning similarity translation with the underlying data geometry. 
Our DAT provably reinstates anisotropy-sensitive log-likelihood terms that cosine similarity ignores, aligning the discriminant with Bayes-optimal scoring. We evaluate DAT on standard spurious correlation benchmarks and demonstrate consistent improvements in both worst-group and average accuracy, while preserving the flexibility of zero-shot inference. These results underscore the importance of explicitly modelling anisotropy and local density in multimodal embeddings to achieve robust classification.

The main contributions can be summarised as follows:
\begin{itemize}
    % \item We introduce a density-aware translation method that is architecture-agnostic and operates without fine-tuning or prompt engineering.
    % \item We establish a novel connection between spurious correlations in multimodal models and the neglect of anisotropic embedding geometry in similarity-based scoring.  
    % \item We provide both theoretical guarantees and empirical evidence showing that our method consistently improves worst-group performance. 
    \item We introduce DAT, a zero-shot mechanism that rescales image–text similarities using group reference densities, requiring no fine-tuning, no prompt engineering, and no access to spurious attribute labels at test time.

    \item We provide a theoretical analysis showing that DAT corrects cosine’s bias under anisotropic embeddings, reinstating missing log-likelihood terms and aligning with Bayes-optimal decision rules.

    \item Through experiments on different benchmarks and across multiple VLMs, we demonstrate consistent improvements in different metrics.
\end{itemize}

\section{Related Work}
\label{related}
% \textbf{Spurious Correlations in unimodal models.}  
% Early work on spurious correlation mitigation in vision focused on supervised settings, where methods such as distributionally robust optimization (DRO) \citep{sagawa2019distributionally}, invariant risk minimization (IRM) \citep{arjovsky2019invariant}, and group reweighting \citep{creager2021environment} improve worst-group accuracy by explicitly incorporating group labels during training. 
\textbf{Debiasing VLMs via Fine-tuning.}  
 Several recent works adapt CLIP and related VLMs through bias-aware fine-tuning objectives. \citet{yang2023mitigating} introduces a multimodal contrastive loss that explicitly separates spurious attributes from class-defining features by encoding spurious cues in language. \citet{varma2024ravl} identifies spurious correlations at the region level via clustering, and then applies a region-aware loss that suppresses spurious regions while emphasising causal ones. \citet{zhang2022contrastive} develops contrastive adaptors that not only align sample embeddings with their class prototypes but also bring same-class samples closer together, improving group robustness with minimal parameters.  \citet{zhang2024amend} propose a prompt tuning method that disentangles causal and spurious features by decoupling alignment into two contrastive phases. \citet{pang2024cross} instead leverages vectorised group attributes to explicitly debias image representations under sub-population shifts. \citet{dehdashtian2024fairerclip} frame CLIP debiasing in a reproducing kernel Hilbert space and employ a statistical dependence measure to decorrelate representations from spurious attributes.

\textbf{Zero-shot Debiasing in VLMs.}  
Some methods aim to mitigate spurious correlations through embedding-based debiasing without retraining, thereby preserving the zero-shot capability of VLMs. Ideal-Prompt \citep{trager2023linear} constructs an ideal text representation by combining basis vectors in the embedding space. Perception CLIP \citep{an2024perceptionclip} is a two-stage procedure that first infers contextual attributes (e.g., background) and then conditions object classification on them. Orth-Cali \citep{chuang2023debiasing} project text embeddings onto subspaces orthogonal to spurious directions, showing that text-only calibration suffices for robust classifiers and fair generative models.  \citet{ge2023improving} improves zero-shot accuracy by detecting potentially misclassified images via prediction consistency and augmenting text prompts with hierarchical label information from WordNet. \citet{adila2024roboshot} proposes ROBOSHOT that leverages large language models to extract spurious-related insights from task descriptions and applies linear projection to suppress harmful and enhance beneficial embedding components, though it inherits fragility from LLM-generated cues. Most recently, \citet{lu2025mitigating} presents TIE, a zero-shot framework that translates image embeddings along directions guided by spurious text prompts, reducing reliance on shortcuts while preserving distributional structure. In TIE, it is assumed that the spurious labels of the test images are available. To relax this assumption, TIE* is introduced, which operates without access to spurious labels.

\begin{figure*}[t]
\centering
\begin{subfigure}[t]{0.45\textwidth}
  \centering
  \includegraphics[width=\textwidth, height=0.6\textwidth, keepaspectratio]{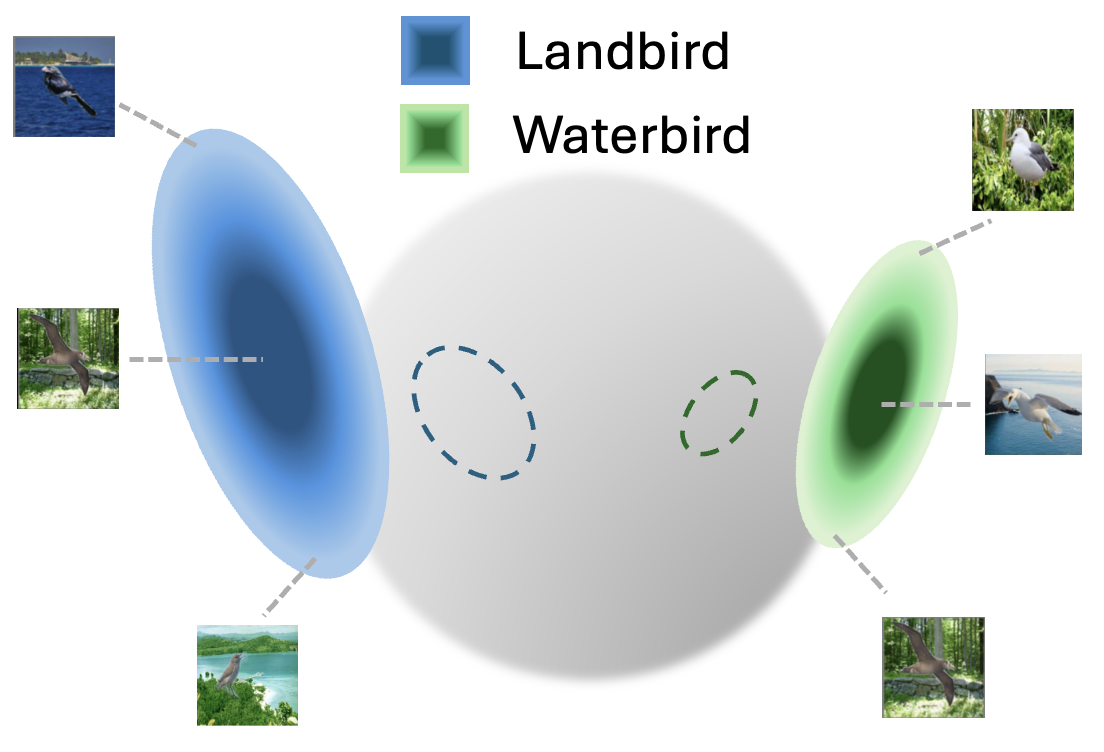}
  \caption{Ellipsoidal shape of embedding in CLIP.}
  \label{fig:ellipsoide}
\end{subfigure}
\hspace{0.02\textwidth} % Adjust this value to control the horizontal space between the figures
\begin{subfigure}[t]{0.45\textwidth}
  \centering
    \includegraphics[width=\textwidth, height=0.6\textwidth, keepaspectratio]{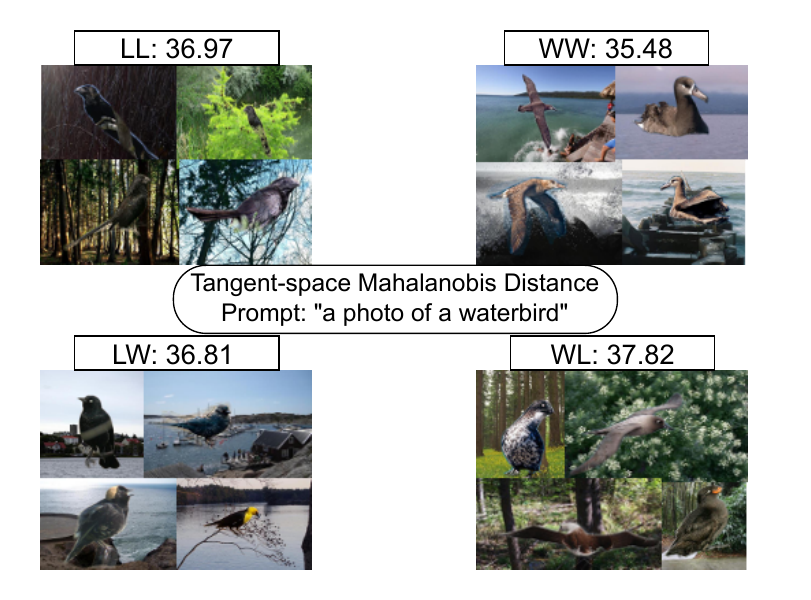}
  \caption{Distance inconsistency in CLIP.}
  \label{fig:distance}
\end{subfigure}
\caption{Motivation for DAT. (a) CLIP embeddings exhibit anisotropic ellipsoidal structure, where frequent, spuriously correlated samples cluster near the mean, while rare but semantically meaningful samples lie in sparser regions. (b) Tangent-space Mahalanobis Distance for group embeddings shows that spuriously aligned LW has a lower distance, on average, to the waterbird prompt than the true WL group. Abbreviations: \textit{LL}: landbird with land background, \textit{WL}: waterbird with land background, \textit{LW}: landbird with water background, \textit{WW}: waterbird with water background.}
\label{fig:motivation}
\end{figure*}

\section{Density-Aware Translation}
In this section, we begin with the formal problem statement in Section \ref{setup}, followed by the motivation in Section \ref{motivation}, the proposed DAT in Section \ref{pipeline}, and finally the theoretical analysis in Section \ref{theory}.

\subsection{Problem Statement}
\label{setup}
% \textbf{Setting.}
We study the group robustness setting \citep{sagawa2019distributionally} in the context of zero-shot classification.  We adopt the standard definition of zero-shot, where a model is expected to correctly classify samples from unseen classes at test time \citep{wang2019survey}. A detailed discussion of DAT’s conformity to this setting, in relation to prior work on spurious correlations, is provided in Appendix~\ref{app:ZS}.

Let $(x,y,a) \in \mathcal{X} \times \mathcal{Y} \times \mathcal{A}$ denote an input image $x$, with class label $y \in \mathcal{Y}$, and a spurious attribute $a \in \mathcal{A}$ (e.g., background or context).  
We denote $|\mathcal{Y}| = K$ for the number of classes and $|\mathcal{A}| = M$ for the number of spurious attributes.  
Each group is defined as a pair $(y,a)$,  
\[
g_{y,a} \in \mathcal{G} = \mathcal{Y} \times \mathcal{A}, 
\quad |\mathcal{G}| = K \cdot M,
\]
so that robustness is evaluated at the group level. For the zero-shot model, we denote $\phi_I(.)$ as the frozen image encoder and $\phi_T(.)$ as the frozen text encoder. 

% \textbf{Prompts.}  
Following \citet{lu2025mitigating}, we assume access to text prompts describing both labels and spurious attributes.  
For each label $y \in \mathcal{Y}$, we define a class-only prompt $t_y \in \mathcal{T}^\mathcal{Y}$, e.g., ``\texttt{a photo of a {$y$}}". For each spurious attribute $a \in \mathcal{A}$, we define an attribute prompt $t_a \in \mathcal{T}^{\mathcal{A}}$, e.g., ``\texttt{a photo of a {$a$}}". Finally, to represent groups, we construct concatenated prompts as group prompts $t_{y,a} \in \mathcal{T}^{\mathcal{Y}, \mathcal{A}}$, e.g., ``\texttt{a photo of a {$y$} with {$a$}}". 
%Then, for each type of prompt, we obtain the corresponding text embedding:
% \[
% w_a = \phi_T(t_a),\quad w_y=\phi_T(t_y),\quad w_{y,a}=\phi_T(t_{y,a})
% \]
To capture group geometry, we consider group samples from the training or validation set as
\(\{x^{(h)}_{y,a}\}_{h=1}^{N_{y,a}}\),  
where \(N_{y,a}\) denotes the number of available samples in group \((y,a)\). When the spurious attribute \(a\) is not explicitly provided, we employ DAT$^\ast$, which infers group membership in a zero-shot manner via attribute prompts:
\begin{equation}
  \hat{a} \;=\; \arg\max_{a \in \mathcal{A}} 
  \left\langle \phi_I(x), \phi_T(t_a) \right\rangle.
  \label{zshot}
\end{equation}
The corresponding image embeddings are then obtained as
\[
\text{DAT}: z^{(h)}_{y,a} \;=\; \phi_I\!\big(x^{(h)}_{y,a}\big), \qquad \text{DAT*}:  z^{(h)}_{y,\hat{a}} \;=\; \phi_I\!\big(x^{(h)}_{y,\hat{a}}\big).
\]

The reference samples are used only to estimate non-parametric group-level geometry in the frozen embedding space; they are not used to update model parameters or tune the VLM. Thus, DAT preserves the zero-shot inference setting. When spurious labels are unavailable, DAT* replaces explicit group annotations with zero-shot attribute inference. 

\begin{figure*}[t]
    \centering
    \includegraphics[width=\textwidth]{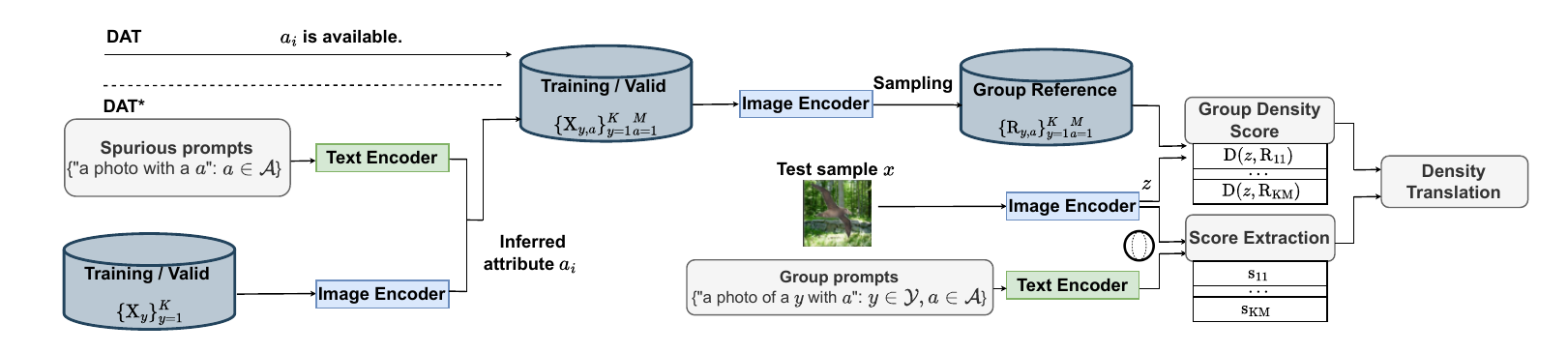}
    \caption{Pipeline of DAT/DAT*: DAT constructs group reference sets, encodes test images and prompts, estimates local density, and rescales similarities. DAT* follows the same pipeline but infers group attributes automatically, without explicit annotations.}
    \label{fig:pipeline}
\end{figure*}

\subsection{Understanding the Geometric Bias}
\label{motivation}

As illustrated in Figure~\ref{fig:ellipsoide}, CLIP embeddings are distributed anisotropically on ellipsoidal shells: frequent concepts (often correlated with spurious attributes) cluster closer to the mean, while rare but semantically meaningful concepts lie in sparser, peripheral regions \citep{levi2024double}. For example, in the Waterbirds dataset \citep{sagawa2019distributionally}, ``waterbird on water'' is much more common than ``waterbird on land.'' CLIP's embedding geometry places these spuriously correlated samples near the centre, while rare but informative cases occupy the periphery.  

This geometric bias creates two key challenges for zero-shot classification. First, scoring against a single prompt per class means rare but valid samples can receive lower similarity than spuriously aligned ones, sometimes matching spurious prompts more strongly than their true class. Second, CLIP often overemphasises the largest object and the first prompt token while down-weighting smaller but critical details \citep{abbasi2025clip}, letting background or context cues dominate and limiting the value of prompt augmentation alone.  
We analyse group misalignment using the Tangent–Space Mahalanobis Distance (TMD) on the unit sphere \citep{pennec2006intrinsic}.
Let all embeddings be $\ell_2$–normalised, $\hat\mu_{y,a}$ be group Fréchet mean on the unit hypersphere $\mathbb S^{d-1}$ and $\Log_{\hat\mu_{y,a}}$ be the Riemannian logarithm map that projects points from the sphere to the tangent space at $\hat\mu_{y,a}$. Let $z^{(h)}_{y,a} \in \mathbb{S}^{d-1}$ be the $h$-th image embedding in group $(y,a)$, and $w_{y,a} \in \mathbb{S}^{d-1}$ be the normalised text embedding. Then we define:
\[
\delta^{(h)}_{y,a}=\Log_{\hat\mu_{y,a}}\!\big(z^{(h)}_{y,a}\big),\quad
\delta_{w,y,a}=\Log_{\hat\mu_{y,a}}\!\big(w_{y,a}\big),\quad\]
\[\Sigma_{y,a}=\frac{1}{N_{y,a}}\sum_{h=1}^{N_{y,a}}\delta^{(h)}_{y,a}\,\delta^{(h)\top}_{y,a},
\]
\[
\mathrm{TMD}_{y,a}
=\sqrt{\delta_{w,y,a}^\top\!\big(\Sigma_{y,a}\big)^{-1}\!\delta_{w,y,a}},
\]

where $\Sigma_{y,a}$ denotes the empirical covariance of tangent vectors within group $(y,a)$, and $\delta_{w,y,a}$ represents the projection of the text embedding into the tangent space at the group mean. As shown in Figure~\ref{fig:distance}, using Waterbirds with CLIP ViT-L/14, the spuriously aligned group LW (landbird on water) has a smaller TMD to the ``a photo of a waterbird'' prompt than the true WL (waterbird on land) group. This indicates that background correlation (water) dominates alignment, biasing CLIP toward spuriously aligned groups and away from semantically correct ones.

\subsection{Debiasing via Density-Aware Translation}
\label{pipeline}
% As shown in Figure \ref{fig:pipeline},
As shown in Figure~\ref{fig:pipeline}, we first construct reference sets per group using a sampling procedure with an equal budget $n$ per group. We then apply deterministic feature-space exemplar herding~\citep{rebuffi2017icarl} (Appendix~\ref{app:herd}), which greedily selects embeddings whose running average matches the group mean. This yields a compact set of representative exemplars for each $(y,a)$, providing stable local neighbourhoods for density estimation. Following \citet{levi2024double}, frequent (common) samples tend to lie closer to the centre of the embedding distribution,  
so sampling towards the mean allows us to capture these typical examples and compute a fair density metric for each group. For each group $(y,a)$ we construct a reference set of size $n$:  
\[
\{ z^{(h)}_{y,a} \}_{h=1}^{N_{y,a}}
  \xrightarrow{\text{Sampling}}
  R_{y,a} = \{ z^{(h)}_{y,a} \}_{h=1}^n
\]
% where $n$ is the number of selected samples per subgroup.

Bias annotations (e.g., background or gender) can optionally be used, but are not required.  
In cases without explicit group labels, we have \textbf{DAT*}, which employs an auxiliary zero-shot classifier to infer group membership using Equation~\ref{zshot}.
% \begin{equation}
%   \hat{a} = \arg\max_{a \in \mathcal{A}} \left\langle \phi_I(x), \phi_T(t_a) \right\rangle.  
% \end{equation}

\textbf{Density Ratio Estimation.} For each query image $x \in \mathcal{X}_{\text{test}}$, we then compute its embedding $z = \phi_I(x)$.  
The local density of $z$ relative to $R_{y,a}$ is estimated with the simplified local outlier factor (SLOF) \citep{schubert2014local} as our default proxy:
\[
D_{y,a}(z) = \mathrm{SLOF}(z; R_{y,a}) = \frac{1}{k} \sum_{z_o \in \mathrm{NN}_k(z)} 
\frac{\mathrm{k\text{-}dist}(z)}{\mathrm{k\text{-}dist}(z_o)}.
\]
% where
% \begin{equation}
% \mathrm{SLOF}(z; R_{y,a}) 
% = \frac{1}{k} \sum_{z_o \in \mathrm{NN}_k(z)} 
% \frac{\mathrm{k\text{-}dist}(z)}{\mathrm{k\text{-}dist}(z_o)}.
% \end{equation}
$\mathrm{NN}_k(z)$ denotes the $k$-nearest neighbours of $z$ in $R_{y,a}$ and $\mathrm{k\text{-}dist}(\cdot)$ is the distance to the $k$-th neighbour.  
A larger SLOF value indicates that $z$ lies in a sparser region, i.e., it is less representative of the group. We use SLOF due to its simplicity. Other relative density measures are also applicable. We analyze the effect of different density estimation methods in Appendix~\ref{app:densityeffect}.

\textbf{Density Translation.}  
Given group prompts $t_{y,a}$ and class-only prompts $t_y$, we compute the raw similarities as
\[
s_{y,a}(x) = \langle \phi_I(x), \phi_T(t_{y,a}) \rangle,
\qquad
s_y(x) = \langle \phi_I(x), \phi_T(t_y) \rangle.
\]

We then translate group scores using local density:
\begin{equation}
  \tilde{s}_{y,a}(x) 
  = \frac{s_{y,a}(x)}{(D_{y,a}(z)+ \varepsilon)^\lambda},
  \label{eq:method}
\end{equation}
where $\lambda > 0$ controls the translation strength and small $\varepsilon > 0$ ensures numerical stability. DAT adjusts raw similarities by reducing scores for samples that appear highly similar to a group prompt but lie far from the core of a group's image embeddings, while preserving scores for samples close to their true group.  
% Sparse-region embeddings (rare, non-spurious samples) are less penalised, while dense, spuriously frequent embeddings are down-weighted. 

\textbf{Aggregation and Prediction.}  
In addition to group-specific scores $\{\tilde{s}_{y,a}(x)\}_{a\in\mathcal{A}}$, we define a class-marginal (attribute-averaged) score:
\begin{equation}
\tilde{s}_{y,\mathrm{Avg}}(x)
= \frac{1}{M+1} \Big( \sum_{a\in\mathcal{A}} \tilde{s}_{y,a}(x) + s_{y}(x) \Big).
\end{equation}
Final predictions are made by maximising over group-corrected and averaged scores:
% \begin{equation}
% \hat{y}_q = \arg\max_{y \in \mathcal{Y}} 
% \max\!\big\{\tilde{s}_{y,\mathrm{Avg}}(x_q), \max_{a \in \mathcal{A}} \tilde{s}_{y,a}(x_q)\big\}.    
% \end{equation}
% \begin{equation}
% \hat{y}_q = \arg\max_{y \in \mathcal{Y}} \max_{a \in \mathcal{A}} \{\tilde{s}_{y,a}(x_q), \tilde{s}_{y,\mathrm{Avg}}(x)\}.    
% \end{equation}
\begin{equation}
\hat{y}_q \;=\; \arg\max_{y \in \mathcal{Y}} 
\!\Big\{ \max_{a \in \mathcal{A}} \tilde{s}_{y,a}(x),\; \tilde{s}_{y,\mathrm{Avg}}(x) \Big\}.
\end{equation}
The full procedure of DAT/DAT* is outlined in Algorithm~\ref{alg:DAT} and Algorithm~\ref {alg:DATstar} in Appendix~\ref{app:alg}. Appendix~\ref{app: visualize} further illustrates how density shapes predictions, showing that SLOF separates rare samples from spuriously frequent ones and that DAT enlarges the score margin over the baseline.

\subsection{Theoretical Analysis}

We first show that raw cosine similarity is systematically biased in anisotropic embedding geometries. We then assume a standard log-density fidelity link for SLOF, and prove that DAT reinstates the anisotropy-sensitive terms missing from pure cosine scoring.
\label{theory}
% in preamble (if not already present)

\textbf{Anisotropy Effect on Cosine Similarity.}
Due to the anisotropic nature of CLIP embeddings, a query embedding $z \in \mathbb{S}^{d-1}$ can exhibit elliptical concentration, which is naturally captured by the Kent (Fisher--Bingham) distribution \citep{kent1982fisher,mardia2009directional}.%,sra2012short,kume2005saddlepoint}. 

With parameters $(\kappa,\beta,\Gamma)$ and orthonormal frame $\Gamma=(\gamma_1,\gamma_2,\gamma_3,\ldots)$, its density is
% \[
% p(z\mid \kappa,\beta,\Gamma)
% = c_d(\kappa,\beta)\,
% \exp\!\Big\{
% \kappa\,\gamma_1^\top z
% + \beta\big[(\gamma_2^\top z)^2-(\gamma_3^\top z)^2\big]
% \Big\},
% \quad z\in\mathbb{S}^{d-1}.
% \]

\begin{equation*}
\begin{aligned}
p(z\mid \kappa,\beta,\Gamma)
&= c_d(\kappa,\beta)\,
\exp\!\big(\kappa\,\gamma_1^\top z\big) \\
&\quad\times
\exp\!\big(\beta[(\gamma_2^\top z)^2-(\gamma_3^\top z)^2]\big), \quad z\in\mathbb{S}^{d-1}.
\end{aligned}
\end{equation*}

Here, $\kappa\!\ge\!0$ controls axial concentration toward $\gamma_1$, and $\beta$ controls ellipticity (anisotropy) in the $(\gamma_2,\gamma_3)$-plane.  
The Bayes log-density is therefore
\begin{equation}
    \log p(z) = \kappa\,\gamma_1^\top z + \beta\big[(\gamma_2^\top z)^2-(\gamma_3^\top z)^2\big] - \log c_d(\kappa,\beta).
\end{equation}
Cosine scoring with text direction $w$ uses only the axial projection $w^\top z$. In particular, if we take $w=\gamma_1$ (the Kent mean axis), cosine recovers the linear term $\kappa\,\gamma_1^\top z$ but ignores the quadratic anisotropy term $\beta\big[(\gamma_2^\top z)^2-(\gamma_3^\top z)^2\big]$.

% \textbf{Anisotropy Effect on Cosine Similarity.}
% Due to the anisotropic nature of CLIP embeddings, a query embedding $z_q \in \mathbb{S}^{d-1}$ can exhibit elliptical concentration that a Kent (Fisher-Bingham) distribution \citep{kent1982fisher, mardia2009directional, sra2012short, kume2005saddlepoint} captures. 
% With parameters $(\kappa,\beta,\Gamma)$ and orthonormal frame $\Gamma=(\gamma_1,\gamma_2,\gamma_3,\ldots)$, its density is
% \[
% p(z_q\mid \kappa,\beta,\Gamma)
% \;=\;
% c_d(\kappa,\beta)\,
% \exp\!\Big\{
% \kappa\,\gamma_1^\top z_q
% \;+\;
% \beta\big[(\gamma_2^\top z_q)^2-(\gamma_3^\top z_q)^2\big]
% \Big\},
% \qquad z_q\in\mathbb{S}^{d-1},
% \]
% where $\kappa\!\ge\!0$ controls axial concentration toward $\gamma_1$ and $\beta$ controls ellipticity (anisotropy) in the $(\gamma_2,\gamma_3)$-plane. The Bayes log-score is
% \begin{equation}
%     \log p(z_q)=\kappa\,\gamma_1^\top z_q+\beta[(\gamma_2^\top z_q)^2-(\gamma_3^\top z_q)^2]-\log c_d(\kappa,\beta)
% \end{equation}

% Cosine scoring with text direction $w$ uses only the axial projection $w_y^\top z_q$. In particular, if we take $w_y=\gamma_1$ (the Kent mean axis), cosine recovers the first (linear) term $\kappa\,\gamma_1^\top z$ but drops the quadratic anisotropy term $\beta\big[(\gamma_2^\top z)^2-(\gamma_3^\top z)^2\big]$.

\begin{proposition}\label{prop:cosine-kent}
Let $p$ be $\mathrm{Kent}(\kappa,\beta,\Gamma)$ with $\beta\neq 0$ and set the cosine direction $w_y=\gamma_1$.
Then there exist $z_+,z_-\in\mathbb{S}^{d-1}$ with $w_y^\top z_+=w_y^\top z_-$ but $\log p(z_+)>\log p(z_-)$.
Hence, ranking by cosine similarity can disagree with Bayes ranking $\log p(z)$.
\end{proposition}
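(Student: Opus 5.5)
The proof is an explicit construction. We fix the common axial projection and tilt all of the remaining mass either entirely into $\gamma_2$ or entirely into $\gamma_3$. Since $\Gamma$ is orthonormal and the Kent frame requires $d\ge 3$, the vectors $\gamma_1,\gamma_2,\gamma_3$ are orthonormal. Fix any $t\in(-1,1)$, for concreteness $t=0$, and set $r=\sqrt{1-t^2}>0$. We define
\[
z_{(2)} = t\,\gamma_1 + r\,\gamma_2,\qquad z_{(3)} = t\,\gamma_1 + r\,\gamma_3 .
\]
Both vectors have unit norm by orthonormality, so they lie in $\mathbb{S}^{d-1}$. Both also satisfy $w_y^\top z = \gamma_1^\top z = t$, so cosine similarity with $w_y=\gamma_1$ cannot separate them.

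Next we evaluate the Bayes log-density. The normalising constant $\log c_d(\kappa,\beta)$ and the linear term $\kappa t$ are the same for both points, so they cancel in the difference:
\[
\log p(z_{(2)}) - \log p(z_{(3)}) = \beta\big[(r^2 - 0) - (0 - r^2)\big] = 2\beta r^2 .
\]
Because $\beta\neq 0$ and $r>0$, this difference is nonzero. If $\beta>0$ we take $z_+=z_{(2)}$ and $z_-=z_{(3)}$; if $\beta<0$ we swap the two. In either case $w_y^\top z_+=w_y^\top z_-$ and $\log p(z_+)>\log p(z_-)$.

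Finally, we pass to the ranking claim. Any score that depends on $z$ only through $w_y^\top z$ assigns $z_+$ and $z_-$ the same value. Cosine scoring therefore ties two points that the Bayes score strictly orders, so the two rankings disagree.

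For a strict inversion rather than a tie, we can use continuity. We perturb $z_+$ slightly toward $-\gamma_1$ along the sphere, giving $z_+'$ with $w_y^\top z_+' < w_y^\top z_-$. Since $\log p$ is continuous on the sphere, $\log p(z_+') > \log p(z_-)$ still holds for a small enough perturbation. Cosine then ranks $z_-$ above $z_+'$, while the Bayes score ranks $z_+'$ above $z_-$.

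The only points that need care are that $d\ge 3$, so $\gamma_3$ exists, and that $t$ is chosen with $|t|<1$ so that $r>0$. Apart from these, the argument is a direct computation.
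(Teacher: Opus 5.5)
Your proof is correct and is essentially the paper's construction: fix the axial coordinate $t$, put the remaining mass on $\gamma_2$ for one point and on $\gamma_3$ for the other, and read off the log-density gap $2\beta r^2$. Two of your choices make it slightly tighter than the paper's version: you take $r=\sqrt{1-t^2}$, whereas the paper allows $r<\sqrt{1-t^2}$ with all other coordinates zero, which would not give a unit vector; and you explicitly swap $z_\pm$ when $\beta<0$.
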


This shows that cosine similarity systematically overlooks anisotropy effects, potentially misranking rare but semantically important samples.

\begin{assumption}[log-SLOF fidelity]\label{assump:logslof}
There exist constants \(\alpha>0\), \(\eta\in\mathbb{R}\), and a bounded error \(\epsilon_{y,a}(z)\) with \(|\epsilon_{y,a}(z)|\le c\) such that
\[
\log D_{y,a}(z) \;=\; \alpha\big(-\log p_{y,a}(z)\big)\;+\;\eta\;+\;\epsilon_{y,a}(z),
\]
where \(p_{y,a}(z)\) is the group density.
\end{assumption}

This assumption formalises the link between $k$NN distance statistics and negative log-density \citep{loftsgaarden1965nonparametric,biau2015lectures}. LOF/SLOF are widely used density proxies that satisfy such a relation up to bounded error \citep{breunig2000lof,zimek2012survey}.

% \begin{lemma}[Expected cosine under vMF]\label{lem:vmf-cos}
% Let \(Z\sim\mathrm{vMF}(\mu,\kappa)\) and \(w\in\mathbb{S}^{d-1}\) with angle \(\theta=\arccos(\mu^\top w)\).
% Then
% \[
% \mathbb{E}[\,w^\top Z\,] \;=\; A_d(\kappa)\,\cos\theta,
% \]
% where \(A_d(\kappa)\in(0,1)\) increases with \(\kappa\).
% \end{lemma}

\textbf{DAT Margin.}
Given group \((y,a)\), let \(s_{y,a}(x)=\langle \phi_I(x),\phi_T(t_{y,a})\rangle\) denote the raw similarity
(cosine between image and text embeddings), and let \(D_{y,a}(z)\) be a local sparsity proxy (SLOF).  
For the purpose of theoretical analysis, we work in the logit space, defining
\[
\ell_{y,a}(x) := \tau\, w_{y,a}^\top z,
\]
where \(w_{y,a}\) is the normalised text embedding and \(\tau>0\) is a temperature. In logit space, Eq.~\ref{eq:method} corresponds to subtracting 
$\lambda \log D$ from the raw logit $\ell$. 
We then define the DAT margin as
\[
m_{y,a}(z) \;:=\; \ell_{y,a}(z)\;-\;\lambda\,\log D_{y,a}(z).
\]

\begin{table*}[t]
\centering
\caption{Zero-shot classification results on Waterbirds using the Worst-Group accuracy (WG), Average accuracy (Avg), and Gap between Average and Worst-Group accuracy (Gap). Higher WG ($\uparrow$) and Avg, and lower Gap ($\downarrow$) values are better. The best and second-best results are highlighted in \textbf{bold} and \underline{underlined}, respectively. }
\label{tab:waterbirds}
\begin{tabular}{lccccccccc}
\toprule
\multirow{2}{*}{Method} &
\multicolumn{3}{c}{CLIP (ViT-B/32)} &
\multicolumn{3}{c}{CLIP (ViT-L/14)} &
\multicolumn{3}{c}{CLIP (ResNet50)} \\
\cmidrule(lr){2-4} \cmidrule(lr){5-7} \cmidrule(lr){8-10}
& WG  & Avg  & Gap  
& WG  & Avg  & Gap  
& WG  & Avg  & Gap  \\
\midrule
ZS & 41.37 & 68.48 & 27.11 & 31.93 & 83.72 & 51.79 & 35.36 & 80.64 & 45.28 \\
Group Prompt & 43.46 & 66.79 & 23.33 & 10.44 & 56.12 & 45.68 & 49.84 & 70.96 & 21.12 \\
Ideal words & 60.28 & 79.20 & 18.92 & 64.17 & 87.67 & 23.50 & 39.90 & 79.48 & 40.39 \\
Orth-Cali & 54.99 & 69.19 & 14.20 & 58.86 & 86.31 & 27.45 & 60.84 & 84.47 & 19.67 \\
Perception CLIP & 59.78 & \textbf{82.50} & 22.72 & 54.12 & 86.74 & 32.62 & 48.21 & \textbf{91.51} & 43.30 \\
ROBOSHOT & 54.41 & 71.92 & 17.51 & 45.17 & 64.43 & 19.26 & \underline{69.60} & \underline{96.05} & 42.45 \\
TIE & \underline{71.35} & 79.82 & \underline{8.47} & 78.82 & 84.12 & \textbf{5.30} & 52.96 & 83.62 & 30.66 \\
TIE* & 61.24 & 76.91 & 15.67 & 61.60 & 78.98 & 17.38 & 34.11 & 81.19 & 47.08 \\ \midrule
DAT & \textbf{75.08} & 80.36 & \textbf{5.28} & \textbf{83.33} & \textbf{89.57} & \underline{6.42} & \textbf{75.08} & 83.83 & \textbf{8.75} \\
DAT* & 64.02 & \underline{82.33} & 18.31 & \underline{79.75} & \underline{87.87} & 8.12 & 63.71 & 82.65 & \underline{18.94} \\

\bottomrule
\end{tabular}
\end{table*}

\begin{theorem}[Local Bayes alignment]\label{thm:local-bayes}
Under Assumption~\ref{assump:logslof},
\[
m_{y,a}(z) \;=\; \tau\,w_{y,a}^\top z \;+\; \alpha\lambda\,\log p_{y,a}(z) \;+\; r_{y,a}(z),\]
\[r_{y,a}(z) := -\lambda\eta - \lambda\,\epsilon_{y,a}(z),
\]
so that \(|r_{y,a}(z)| \le \lambda(|\eta|+c)=:B_0\).
Hence, the DAT discriminant equals a logit term plus a (scaled) log-likelihood term,
up to a bounded remainder. With equal priors, argmax over \((y,a)\)
is Bayes-aligned.
\end{theorem}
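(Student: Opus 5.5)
The identity is a direct substitution of Assumption~\ref{assump:logslof} into the definition of the DAT margin; the Bayes-alignment clause then follows by comparing discriminants up to the bounded remainder. We begin from
\[
m_{y,a}(z) = \ell_{y,a}(z) - \lambda \log D_{y,a}(z), \qquad \ell_{y,a}(z) = \tau\, w_{y,a}^\top z .
\]
We replace $\log D_{y,a}(z)$ by $-\alpha\log p_{y,a}(z) + \eta + \epsilon_{y,a}(z)$. Distributing $-\lambda$ gives
\[
m_{y,a}(z) = \tau\, w_{y,a}^\top z + \alpha\lambda \log p_{y,a}(z) - \lambda\eta - \lambda\,\epsilon_{y,a}(z).
\]
This is exactly the stated decomposition with $r_{y,a}(z) = -\lambda\eta - \lambda\,\epsilon_{y,a}(z)$. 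The bound uses $\lambda>0$, the triangle inequality, and $|\epsilon_{y,a}(z)|\le c$:
\[
|r_{y,a}(z)| \le \lambda|\eta| + \lambda c = B_0 .
\]
One point needs checking. Passing from Eq.~\ref{eq:method} to logit space requires $\varepsilon\to 0$, or equivalently absorbing $\log(D+\varepsilon) - \log D$ into the error term. I would note explicitly that the theorem is stated for the idealised margin $m_{y,a}$ as defined, so no approximation is needed.

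For the Bayes-alignment clause, the argument is a comparison of the DAT discriminant with the Bayes discriminant. With equal priors, the Bayes rule over groups is $\arg\max_{(y,a)} \log p_{y,a}(z)$. Dividing $m_{y,a}$ by $\alpha\lambda$ gives
\[
\frac{m_{y,a}(z)}{\alpha\lambda} = \log p_{y,a}(z) + \frac{\tau}{\alpha\lambda}\, w_{y,a}^\top z + \frac{r_{y,a}(z)}{\alpha\lambda}.
\]
So the DAT score is the Bayes score plus two perturbations. The first is a cosine term of magnitude at most $\tau/(\alpha\lambda)$, since $|w^\top z|\le 1$. The second is a remainder of magnitude at most $B_0/(\alpha\lambda)$.

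The key step is the standard margin argument. Suppose the Bayes winner $g^\star$ beats every competitor $g$ by a log-likelihood gap
\[
\Delta(z) = \log p_{g^\star}(z) - \max_{g\ne g^\star}\log p_g(z) .
\]
If $\Delta(z) > 2\bigl(\tau + B_0\bigr)/(\alpha\lambda)$, the perturbations cannot flip the argmax, so DAT agrees with Bayes. By contrast, raw cosine scoring has no $\log p$ term at all. Proposition~\ref{prop:cosine-kent} shows that it can misrank points with identical cosine but different Kent likelihood, whereas $m_{y,a}$ recovers the quadratic anisotropy term through $\log p_{y,a}$. In the Kent case one can also note that $\tau w_{y,a}^\top z$ with $w_{y,a}=\gamma_1$ is collinear with the linear Kent term. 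It therefore merely rescales $\kappa$ and does not disturb alignment.

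The main obstacle is interpretive rather than computational. ``Bayes-aligned'' must be given a precise meaning, since exact agreement fails for small margins, the remainder $r_{y,a}$ varies across groups, and the cosine term adds a group-dependent shift. I would formalise the claim as agreement with the Bayes argmax whenever the likelihood gap exceeds $2(\tau+B_0)/(\alpha\lambda)$. I would then remark that this threshold shrinks as $\lambda$ grows, so for large translation strength the DAT rule coincides with the Bayes rule except on a vanishing-margin set.
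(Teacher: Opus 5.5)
Your derivation of the decomposition and the bound $|r_{y,a}(z)|\le B_0$ is the paper's substitution argument. Your treatment of Bayes alignment is the same margin argument the paper uses. The paper states it pairwise, $\alpha\lambda|\log p_g(z)-\log p_{g'}(z)| > |\tau(w_g-w_{g'})^\top z| + |r_g(z)|+|r_{g'}(z)|$, and your uniform threshold $2(\tau+B_0)/(\alpha\lambda)$ implies this condition via $|w^\top z|\le 1$ and $|r|\le B_0$. So the proof of the statement itself is fine.

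Two of your side remarks are false, and you should drop them.

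First, the threshold does not vanish as $\lambda$ grows. Since $B_0=\lambda(|\eta|+c)$ is itself linear in $\lambda$, you get $2(\tau+B_0)/(\alpha\lambda) = 2\tau/(\alpha\lambda) + 2(|\eta|+c)/\alpha$, which tends to $2(|\eta|+c)/\alpha$. Increasing $\lambda$ amplifies the SLOF error exactly as fast as the log-likelihood signal; only the cosine perturbation is suppressed. You can sharpen the floor to $2c/\alpha$, because $\eta$ is a single constant shared by all groups and cancels in every pairwise difference. It still does not go to zero unless $c=0$, which is why the paper only claims alignment when $B_0$ is small.

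Second, in the Kent case, $\tau\,\gamma_1^{(g)\top}z$ being collinear with group $g$'s own linear term does not make it harmless. Across groups it contributes $\tau(\gamma_1^{(g)}-\gamma_1^{(g')})^\top z$ to the pairwise difference. That shifts the weight of the linear part against the anisotropy and normaliser terms, which moves the decision boundary away from the Bayes boundary, since $\log c_d(\kappa_g,\beta_g)$ is not updated. This is exactly the perturbation your margin condition already budgets for, so the remark contradicts your own correct argument.
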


\begin{corollary}[DAT reinstates anisotropy under Kent]
\label{cor:dat-kent}
Under Assumption~\ref{assump:logslof} and the Kent (Fisher-Bingham) model for the group density, the DAT margin admits the decomposition
% \[
% m(z)
% \;\approx\;
% \underbrace{\big(\tau\,w+\alpha\lambda\,\kappa\,\gamma_1\big)^\top z}_{\text{linear (axial) part}}
% \;+\;
% \underbrace{\alpha\lambda\,\beta\big[(\gamma_2^\top z)^2-(\gamma_3^\top z)^2\big]}_{\text{anisotropy correction}}
% \;-\;\alpha\lambda\,\log c_d(\kappa,\beta),
% \]

\begin{equation*}
\begin{aligned}
m(z)
&\approx
\underbrace{\big(\tau\,w+\alpha\lambda\,\kappa\,\gamma_1\big)^\top z}_{\text{linear (axial) part}} \\
&\quad
+\underbrace{\alpha\lambda\,\beta
\big[(\gamma_2^\top z)^2-(\gamma_3^\top z)^2\big]}_{\text{anisotropy correction}}
-\alpha\lambda\,\log c_d(\kappa,\beta).
\end{aligned}
\end{equation*}

where the constant terms (including $r_{y,a}(z)$) do not affect the argmax. 
Thus, when $\beta\neq 0$, DAT adds the missing quadratic, anisotropy-sensitive term that pure cosine scoring ignores, correcting its bias in elliptical embeddings.
\end{corollary}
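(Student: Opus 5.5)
The plan is to substitute the Kent log-density directly into the decomposition of Theorem~\ref{thm:local-bayes}. Fix a group $(y,a)$ and drop subscripts, writing $w=w_{y,a}$ and $p=p_{y,a}=\mathrm{Kent}(\kappa,\beta,\Gamma)$. By Theorem~\ref{thm:local-bayes}, which rests on Assumption~\ref{assump:logslof},
\[
m(z)=\tau\,w^\top z+\alpha\lambda\,\log p(z)+r(z),\qquad |r(z)|\le B_0 .
\]
Into this we insert the Kent Bayes log-density stated before Proposition~\ref{prop:cosine-kent},
\[
\log p(z)=\kappa\,\gamma_1^\top z+\beta\big[(\gamma_2^\top z)^2-(\gamma_3^\top z)^2\big]-\log c_d(\kappa,\beta).
\]

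The second step is to collect terms by their dependence on $z$.

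\begin{itemize}
\item The two terms linear in $z$ combine as $\tau\,w^\top z+\alpha\lambda\kappa\,\gamma_1^\top z=(\tau w+\alpha\lambda\kappa\gamma_1)^\top z$. This is the axial part.
\item The quadratic term $\alpha\lambda\beta[(\gamma_2^\top z)^2-(\gamma_3^\top z)^2]$ is the anisotropy correction.
\item The term $-\alpha\lambda\log c_d(\kappa,\beta)$ is constant in $z$.
\end{itemize}

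The remainder $r$ is what makes the identity an approximation rather than an equality. I will read the symbol ``$\approx$'' in the corollary precisely as ``equal up to an additive remainder of absolute value at most $B_0=\lambda(|\eta|+c)$''. The statement then follows as an exact identity plus this bounded error.

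To justify the remark about the argmax, I will split $r$ into its two parts. The term $-\lambda\eta$ is a global constant shared by all groups, so it cannot change the argmax over $(y,a)$. The term $-\lambda\epsilon(z)$ is not constant in general. For it I will either argue that it is uniformly bounded by $\lambda c$, so it can flip a decision only when the margin between the top two groups is below $2\lambda c$, or simply adopt the paper's convention of treating it as negligible. I expect this to be the only delicate point, since the corollary's wording ``constant terms (including $r$)'' is slightly loose. I will state the margin condition explicitly so the claim is rigorous.

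The constant $-\alpha\lambda\log c_d(\kappa,\beta)$ needs the same care. It is independent of $z$, but it is group-dependent when the groups have different $(\kappa,\beta)$. So it does not affect the ranking of $z$ within a group, and across groups it acts as a bias term, as in Bayes scoring with normalisers.

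The final step is to show that the quadratic term is genuinely new. Take $w=\gamma_1$ as in Proposition~\ref{prop:cosine-kent}, with $\beta\neq0$. The points $z_\pm$ built there have equal values of $w^\top z$, and hence equal axial parts. Their difference in $m$ is therefore $\alpha\lambda\beta[(\gamma_2^\top z_+)^2-(\gamma_3^\top z_+)^2-(\gamma_2^\top z_-)^2+(\gamma_3^\top z_-)^2]\neq0$, up to the bounded remainder. This shows that DAT recovers the anisotropy ranking that pure cosine scoring misses.
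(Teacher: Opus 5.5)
Your proposal is correct and follows the paper's own proof sketch: substitute the Kent log-density into the decomposition of Theorem~\ref{thm:local-bayes} and collect the linear, quadratic, and $z$-independent terms. Your added care is sound and sharpens the paper's looser ``absorb $r$ and constants into a bias'' step: $-\lambda\eta$ cancels across groups, the non-constant part $-\lambda\epsilon_{y,a}(z)$ cannot change the argmax once the remainder-free score gap exceeds $2\lambda c$, and $-\alpha\lambda\log c_d(\kappa,\beta)$ acts as a group-dependent bias.
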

Proofs of Proposition~\ref{prop:cosine-kent}, Theorem~\ref{thm:local-bayes}, and Corollary~\ref{cor:dat-kent} are detailed in Appendix~\ref{Proof}.

\textbf{Beyond Bayes alignment.}
% We show that DAT can decreases standard groupwise surrogate risks (logistic/hinge) relative to raw cosine similarity under mild conditions; see Appendix~\ref{app:surrogate} for statements and complete proofs.
DAT also admits a surrogate-risk interpretation. 
For two groups $g=(y,a)$ and $g'=(y',a')$, DAT changes the pairwise margin as
% \[
% \begin{aligned}
% m_g(z)-m_{g'}(z)
% &= \alpha\lambda \Delta \log p(z) \\
% &\quad + \tau \Delta w^\top z + \Delta r(z),
% \end{aligned}
% \]
\[
m_g(z)-m_{g'}(z)
=
\alpha\lambda\Delta \log p(z)
+
\tau \Delta w^\top z
+
\Delta r(z),
\]
where $\Delta \log p(z)=\log p_g(z)-\log p_{g'}(z)$, 
$\Delta w=w_g-w_{g'}$, and $\Delta r(z)=r_g(z)-r_{g'}(z)$.
Thus, when the density gap dominates the similarity and bounded-remainder terms, DAT can increase the margin toward the Bayes-preferred group under the density model and reduce logistic/hinge surrogate risks. Formal statements and proofs are provided in Appendix~\ref{app:surrogate}.

\section{Experiments}
\label{experiment}
Following \citet{adila2024roboshot,lu2025mitigating}, we conduct zero-shot classification experiments on benchmark datasets designed to test spurious correlations.  We compare DAT against standard zero-shot classification and recent zero-shot debiasing baselines across multiple datasets and backbones.

\textbf{Datasets and Models.} We use Waterbirds \citep{sagawa2019distributionally}, CelebA \citep{liu2015face}%,ISIC \citep{codella2019skin}
, COVID-19 \citep{cohen2020covid}, and FMoW \citep{christie2018functional}. Further details about the datasets are provided in Appendix~\ref{app:dataset}. Our evaluation covers multiple vision–language models: three CLIP variants (ViT-B/32, ViT-L/14, ResNet-50) \citep{radford2021learning}, as well as ALIGN and AltCLIP. For COVID-19, we adopt BiomedCLIP \citep{zhang2023biomedclip}, a CLIP variant fine-tuned on biomedical data. Following \citet{lu2025mitigating}, we use CLIP ViT-L/14 for FMoW due to the dataset’s complexity. Across all settings, experiments are performed using frozen embeddings from the pre-trained models.

\textbf{Metric and Baselines.} We use three metrics in our experiments: average accuracy \% (Avg), Worst-Group accuracy \% (WG), and the gap between the two \% (Gap), which are evaluated in many spurious correlation works \citep{adila2024roboshot,lu2025mitigating}. A robust model should have a high Avg and WG, with a small Gap between them. In all result tables, the best score is highlighted in bold, and the second-best score is underlined. We benchmark our approach against both simple baselines and the latest methods in robust zero-shot classification. Specifically, the baselines consist of standard zero-shot classification (ZS) and a variant that incorporates group information through prompting (Group prompt). For comparison with prior work, we include recent state-of-the-art approaches such as Ideal words \citep{trager2023linear}, Orth-Cali \citep{chuang2023debiasing}, Perception CLIP \citep{an2024perceptionclip}, ROBOSHOT \citep{adila2024roboshot}, and TIE/TIE* \citep{lu2025mitigating}, which are detailed in Section \ref{related}. 

\textbf{Prompt Details for Reproducibility.} In zero-shot classification, we employ three categories of text prompts: label prompts, spurious prompts, and group prompts. 
% Baseline results are reported from \citep{lu2025mitigating}. 
%For instance, in the Waterbirds dataset, the label prompts are “a photo of a landbird” and “a photo of a waterbird.” 
For spurious prompts, we utilized those provided by \citet{lu2025mitigating}, and for group prompts, we concatenated the class and attribute templates. 
% which is detailed in Appendix \ref{app:details}.
To support reproducibility, the full list of all types of prompts used in our experiments is included in Appendix \ref{app:details}.\footnote{Code available at \url{https://github.com/AfsanehEB/DAT}}%{GitHub repository}.}

\textbf{Settings.} For DAT, the neighbourhood size $k$ is set to 10 on Waterbirds, CelebA, and COVID-19, and to 30 on FMoW. The number of reference samples per group $n$ is 56 for Waterbirds, 128 for CelebA, 40 for COVID-19, and 50 for FMoW. The scaling parameter $\lambda$ is set to 10 for Waterbirds, COVID-19, and FMoW, and 1 for CelebA. For DAT*, we use the same setting as DAT, except on Waterbirds, where we set $\lambda=1$. For Waterbirds, COVID-19, and FMoW, we construct the reference sets from the training split. For CelebA, we use the validation split, which provides enough group coverage. We utilized an NVIDIA H100 GPU with frozen weights. Details on implementation efficiency, in comparison to TIE, are provided in Appendix~\ref{app:details}, where we show that DAT achieves higher efficiency. 
%We evaluate against widely used approaches from the literature. Our baselines are (i) standard zero-shot (ZS) classification and (ii) zero-shot with group prompts. We also compare to other baselines: Ideal Words \citep{trager2023linear}, Orth-Cali \citep{chuang2023debiasing}, PerceptionCLIP \citep{an2024perceptionclip}, and ROBOSHOT \citep{adila2024roboshot}.

\begin{table*}[t]
\centering
\caption{Zero-shot classification results on CelebA using the Worst-Group accuracy (WG), Average accuracy (Avg), and Gap between Average and Worst-Group accuracy (Gap). Higher WG ($\uparrow$) and Avg, and lower Gap ($\downarrow$) values are better. The best and second-best results are highlighted in \textbf{bold} and \underline{underlined}, respectively. }
\label{tab:celeba}
\begin{tabular}{lccccccccc}
\toprule
\multirow{2}{*}{Method} &
\multicolumn{3}{c}{CLIP (ViT-B/32)} &
\multicolumn{3}{c}{CLIP (ViT-L/14)} &
\multicolumn{3}{c}{CLIP (ResNet50)} \\
\cmidrule(lr){2-4} \cmidrule(lr){5-7} \cmidrule(lr){8-10}
& WG  & Avg  & Gap 
& WG  & Avg  & Gap 
& WG  & Avg  & Gap \\
\midrule
ZS              & 78.89 & 84.27 & 5.38 & 73.35 & 81.20 & 7.85 & 69.69 & 81.58 & 11.89 \\
Group Prompt    & 74.90 & 80.38 & 5.48 & 68.94 & 77.86 & 8.92 & 70.59 & 79.48 & 8.89 \\
Ideal words     & 78.12 & 80.96 & 2.84 & 76.67 & \textbf{89.15} & 12.48 & 65.65 & 76.27 & 10.62 \\
Orth-Cali       & 77.92 & 82.31 & 4.39 & 77.69 & 81.39 & 3.70 & 69.13 & 76.47 & 7.34 \\
Perception CLIP & 76.46 & 80.32 & 3.86 & 78.70 & 81.41 & 2.71 & \underline{80.22} & 85.17 & \textbf{4.95} \\
ROBOSHOT        & 80.52 & 84.77 & 4.25 & 82.61 & 85.54 & 2.93 & 73.96 & 80.90 & 6.94 \\
TIE            & \textbf{82.63} & 85.11 & \textbf{2.48} & 84.60 & 86.17 & \underline{1.57} & 75.32 & 81.71 & 6.39 \\
TIE*            & \underline{82.61} & 85.10 & \underline{2.49} & 81.98 & 84.27 & 2.29 & 75.30 & 81.70 & 6.40 \\ \midrule
\textsc{DAT}    & 78.53 & \underline{87.09} & 8.56 & \textbf{84.94} & \underline{86.54} & \textbf{1.19} & \textbf{80.79} & \underline{87.09} & \underline{6.30} \\
\textsc{DAT*}   & 78.53 & \textbf{87.11} & 8.58 & \underline{84.93} & \underline{86.54} & 1.61 & 78.53 & \textbf{88.29} & 9.76 \\
\bottomrule
\end{tabular}
\end{table*}

\begin{table*}[t]
\centering
\caption{Zero-shot classification results on COVID-19 (medical dataset) and FMoW (multi-label dataset).}
\small
\begin{subtable}[t]{0.48\textwidth}
\centering
\caption{COVID-19 (BiomedCLIP).}
\begin{tabular}{lccc}
\toprule
\multicolumn{4}{c}{\textbf{COVID-19}} \\
\midrule
Method & WG  & Avg  & Gap \\
\midrule
ZS              & 44.83 & 61.81 & 16.98 \\
Group Prompt    & 27.58 & 48.27 & 20.69 \\
Ideal words     & 23.53 & 56.84 & 33.31 \\
Orth-Cali       & 44.83 & 51.72 & \underline{6.89} \\
Perception CLIP & 48.84 & 56.87 & 8.03 \\
ROBOSHOT        & 32.75 & 53.10 & 20.35 \\
TIE             & 52.17 & 62.50 & 10.33 \\
TIE*             & 50.22 & 61.08 & 10.86 \\ \midrule
DAT             & \underline{65.22} & \textbf{75.69} & 10.47 \\
DAT*             & \textbf{72.41} & \underline{74.30} & \textbf{1.89} \\
\bottomrule
\end{tabular}
\label{tab:covid}
\end{subtable}%
\hfill
\begin{subtable}[t]{0.48\textwidth}
\centering
\caption{FMoW (ViT-L/14).}
\begin{tabular}{lccc}
\toprule
\multicolumn{4}{c}{\textbf{FMoW}} \\
\midrule
Method & WG  & Avg  & Gap \\
\midrule
ZS              & 18.06 & 26.02 & 7.96 \\
Group Prompt    &  8.75 & 14.69 & 5.94 \\
Ideal words     & 11.14 & 20.21 & 9.07 \\
Orth-Cali       & 19.45 & 26.11 & 6.66 \\
Perception CLIP & 12.61 & 17.70 & \underline{5.09} \\
ROBOSHOT        & 10.88 & 19.79 & 8.91 \\
TIE             & \underline{20.19} & 26.62 & 6.43 \\
TIE*            & 19.84 & 26.65 & 6.81 \\ \midrule
DAT             & \textbf{27.75} & \textbf{31.19} & \textbf{3.44} \\
DAT*            & 18.63 & \underline{29.56} & 10.93 \\
\bottomrule
\end{tabular}
\label{tab:FMoW}
\end{subtable}
\label{tab:covid_FMoW}
\end{table*}

\subsection{Results}
\textbf{Waterbirds and CelebA.} As shown in Table~\ref{tab:waterbirds}, DAT achieves the highest WG accuracy across all baselines, surpassing the strongest prior by roughly 4-14\%, depending on the backbone. The largest gains are observed with ViT-L/14 and ResNet-50 backbones. DAT* also achieves competitive performance, particularly with ViT-based models, and maintains comparable average accuracy. Table~\ref{tab:celeba} reports zero-shot results on CelebA. Consistent with the trends observed in Table~\ref{tab:waterbirds}, DAT and DAT* achieve the strongest performance across most metrics, particularly in terms of WG accuracy. Moreover, DAT* attains results comparable to DAT, with only a minor difference.

\textbf{Medical domain and Multi-Label.}
We further evaluate DAT on medical imaging using COVID-19 chest X-ray datasets with BiomedCLIP. As shown in Table~\ref{tab:covid}, DAT and DAT* outperform all baselines, achieving over 20\% higher WG than the latest baselines and reducing Gap by 5\%.
To assess scalability to richer label and attribute structure, we also test on FMoW, which has 62 classes, and 5 spurious attributes. Table~\ref{tab:FMoW} shows that DAT attains the highest WG and Avg, and markedly reduces the Gap. The label-free variant, DAT$^\ast$, is competitive as well, especially on WG and Avg.

\textbf{Evaluation of Other Models.} To evaluate the effectiveness of our method across a broader range of VLMs, we also test DAT/DAT$^\ast$ on ALIGN and AltCLIP. For a fair comparison, baseline results are taken from \citet{adila2024roboshot}, which is one of the latest baselines that evaluates these two models using these datasets. As shown in Table~\ref{tab:align_altclip}, on both Waterbirds and CelebA, DAT and DAT$^\ast$ in most cases outperform previous methods, with a remarkable improvement in WG accuracy, highlighting robustness across distinct VLMs. 

\subsection{Ablations}

\begin{figure*}[htbp]
\centering

% -------- first figure --------
\begin{minipage}{0.325\textwidth}
\centering
\begin{tikzpicture}
\begin{axis}[
    width=\textwidth, height=0.75\textwidth,
    xlabel={Scaling Factor},
    ylabel={WG Acc.},
    xmin=0, xmax=10,
    ymin=0.3, ymax=0.88,
    grid=major,
    legend style={at={(0.5,0.1)},anchor=south,draw=none,fill=none,font=\small},
]

\addplot[color=orange,mark=square*,mark size=1pt,thick] coordinates {
    (1,0.8271) (3,0.8364) (3,0.8369) (4,0.8411) (5,0.8380)
    (6,0.8333) (7,0.8333) (8,0.8302) (9,0.8318) (10,0.8333)
};
\addlegendentry{DAT}

\addplot[color=green!70!black,dashed,thick] coordinates {(0,0.7882) (1000,0.7882)};
\addlegendentry{TIE}

\addplot[color=blue,dashed,thick] coordinates {(0,0.3193) (1000,0.3193)};
\addlegendentry{ZS}

\end{axis}
\end{tikzpicture}
\vspace{-0.6em}
\caption*{(a)}
\vspace{-0.6em}
\end{minipage}
\hfill
% -------- second figure --------
\begin{minipage}{0.325\textwidth}
\centering
\begin{tikzpicture}
\begin{axis}[
    width=\textwidth, height=0.75\textwidth,
    xlabel={Samples / Group},
    ylabel={WG Acc.},
    xmin=25, xmax=56,
    ymin=0.3, ymax=0.88,
    grid=major,
    legend style={at={(0.5,0.1)},anchor=south,draw=none,fill=none,font=\small},
]

\addplot[color=orange,mark=square*,mark size=1pt,thick] coordinates {
    (25,0.8287) (30,0.8380) (35,0.8224) (40,0.8053)
    (45,0.8287) (50,0.8068) (55,0.8333)
};
\addlegendentry{DAT}

\addplot[color=green!70!black,dashed,thick] coordinates {(0,0.7882) (1000,0.7882)};
\addlegendentry{TIE}

\addplot[color=blue,dashed,thick] coordinates {(0,0.3193) (1000,0.3193)};
\addlegendentry{ZS}

\end{axis}
\end{tikzpicture}
\vspace{-0.6em}
\caption*{(b)}
\vspace{-0.6em}
\end{minipage}
\hfill
% -------- third figure --------
\begin{minipage}{0.325\textwidth}
\centering
\begin{tikzpicture}
\begin{axis}[
    width=\textwidth, height=0.75\textwidth,
    xlabel={Number of Neighbours},
    ylabel={WG Acc.},
    xmin=8, xmax=24,
    ymin=0.3, ymax=0.88,
    grid=major,
    legend style={at={(0.5,0.1)},anchor=south,draw=none,fill=none,font=\small},
]

\addplot[color=orange,mark=square*,mark size=1pt,thick] coordinates {
    (6,0.7834) (8,0.8052) (10,0.8333) (12,0.8333) (14,0.8411)
    (16,0.8473) (18,0.8473) (20,0.8458) (22,0.8505) (24,0.8489)
};
\addlegendentry{DAT}

\addplot[color=green!70!black,dashed,thick] coordinates {(0,0.7882) (1000,0.7882)};
\addlegendentry{TIE}

\addplot[color=blue,dashed,thick] coordinates {(0,0.3193) (1000,0.3193)};
\addlegendentry{ZS}

\end{axis}
\end{tikzpicture}
\vspace{-0.6em}
\caption*{(c)}
\vspace{-0.6em}
\end{minipage}

\caption{Comparison of WG accuracy under varying (a) scaling factor, (b) number of samples per group, and (c) number of neighbours using Waterbirds and CLIP(ViT-L/14).}
\label{ablationcomparison}
\end{figure*}

\begin{table*}[htbp]
\centering
\setlength{\tabcolsep}{3.5pt} % reduce column padding
\caption{Generalisation of DAT to other models: zero-shot classification results using ALIGN and AltCLIP on Waterbirds and CelebA datasets. The best and second-best results are highlighted in \textbf{bold} and \underline{underlined}, respectively.
}
\small
\begin{tabular}{l@{\hskip 2pt}|ccc@{\hskip 2pt}|ccc@{\hskip 2pt}|ccc@{\hskip 2pt}|ccc}
\toprule
& \multicolumn{6}{c|}{\textbf{Waterbirds}} & \multicolumn{6}{c}{\textbf{CelebA}} \\
\cmidrule(lr){2-7} \cmidrule(lr){8-13}
& \multicolumn{3}{c|}{ALIGN} & \multicolumn{3}{c|}{AltCLIP} & \multicolumn{3}{c|}{ALIGN} & \multicolumn{3}{c}{AltCLIP} \\
Method & WG  & Avg & Gap 
       & WG  & Avg & Gap 
       & WG  & Avg  & Gap 
       & WG  & Avg & Gap  \\
\midrule
ZS           & 50.3 & 72.0 & 21.7 & 35.8 & \underline{90.1} & 54.3 
             & 77.2 & 81.8 & 4.6  & 79.7 & 82.3 & \textbf{2.6} \\
Group Prompt &  5.8 & 72.5 & 66.7 & 29.4 & 82.4 & 53.0 
             & 67.4 & 78.3 & 10.9 & 79.0 & 82.3 & 3.3 \\
ROBOSHOT     & 41.0 & 50.9 & \underline{9.9} & 54.8 & 78.5 & 23.7 
             & \textbf{83.4} & \textbf{86.3} & 2.9 & 77.2 & 86.0 & 8.8 \\
\midrule
DAT          & \textbf{73.36} & \textbf{82.90} & \textbf{9.54} 
             & \textbf{81.31} & \textbf{91.54} & \textbf{10.23} 
             & \underline{82.76} & \underline{84.82} & \textbf{2.06} 
             & \textbf{83.89} & \underline{86.64} & \underline{2.75} \\
DAT*         & \underline{63.72} & \underline{76.16} & 12.44 
             & \underline{56.23} & 89.76 & \underline{33.53}
             & 82.69 & 84.80 & \underline{2.11} 
             & \underline{83.89} & \textbf{86.68} & 2.79 \\
\bottomrule
\end{tabular}
\label{tab:align_altclip}
\end{table*}

% \begin{table*}[!t]
% \centering
% \setlength{\tabcolsep}{3.5pt}
% \renewcommand{\arraystretch}{0.95}
% \caption{
% Ablation of DAT score aggregation on Waterbirds. The two corrected scores provide complementary benefits, and their combination gives the most consistent performance.
% }
% \small
% \begin{tabular}{l@{\hskip 4pt}ccc@{\hskip 6pt}ccc@{\hskip 6pt}ccc}
% \toprule
% & \multicolumn{3}{c}{DAT}
% & \multicolumn{3}{c}{$\tilde{s}_{y,a}$ only}
% & \multicolumn{3}{c}{$\tilde{s}_{y,\mathrm{avg}}$ only} \\
% \cmidrule(lr){2-4} \cmidrule(lr){5-7} \cmidrule(lr){8-10}
% Backbone
% & WG & Avg & Gap
% & WG & Avg & Gap
% & WG & Avg & Gap \\
% \midrule
% CLIP (ViT-B/32)
% & 75.08 & 80.36 & 5.28
% & 70.56 & 80.49 & 9.93
% & 72.94 & 81.43 & 8.49 \\
% CLIP (ViT-L/14)
% & 83.33 & 89.57 & 6.42
% & 74.14 & 89.73 & 15.59
% & 81.46 & 89.24 & 7.78 \\
% CLIP (ResNet50)
% & 75.08 & 83.83 & 8.75
% & 77.78 & 83.06 & 5.28
% & 74.36 & 83.69 & 9.33 \\
% \bottomrule
% \end{tabular}
% \label{tab:aggregation_ablation}
% \vspace{-0.5em}
% \end{table*}

% Add to preamble if not already included:
% \usepackage{pgfplots}
% \pgfplotsset{compat=1.18}

% Add to preamble if not already included:
% \usepackage{pgfplots}
% \pgfplotsset{compat=1.18}

\textbf{Effect of Parameters.} We investigate the sensitivity of DAT to the scaling factor ($\lambda$), the number of samples per group ($n$), and the number of neighbours for density estimation ($k$). Results in Figure~\ref{ablationcomparison} show that DAT remains stable across a wide range of settings, consistently outperforming the strongest baseline (TIE), as well as the simplest baseline (zero-shot classification). 

\textbf{Prompt Template and Object Description.} In Appendix~\ref{ablationrebuttal1}–\ref{ablationrebuttal2}, we show that \textsc{DAT*} is largely insensitive to prompt template phrasing, while the semantic granularity of object terms plays a more important role, with fine-grained descriptions consistently improving worst-group performance.

\begin{table*}[t!]
\centering
\caption{Group robustify prompting evaluation using the Waterbirds dataset.}
\begin{tabular}{lccc ccc ccc}
\toprule
\multirow{2}{*}{Method} & 
\multicolumn{3}{c}{ViT-B/32} & 
\multicolumn{3}{c}{ViT-L/14} & 
\multicolumn{3}{c}{ResNet-50} \\
\cmidrule(lr){2-4} \cmidrule(lr){5-7} \cmidrule(lr){8-10}
& WG & Avg & Gap
& WG & Avg & Gap
& WG & Avg & Gap \\
\midrule
DAT          & \textbf{75.08} & \textbf{80.36} & 5.28 & 83.33 & \textbf{89.57} & 6.42 & 75.08 & \textbf{83.83} & 8.75 \\
DAT Robust   & 74.99 & 80.12 & \textbf{5.13}
                    & \textbf{83.49} & 89.09 & \textbf{5.60} 
                    & \textbf{75.39} & 83.65 & \textbf{8.26} \\
\bottomrule
DAT*          & 64.02 & \textbf{82.33} & 18.31 & \textbf{79.75} & \textbf{87.87} & 8.12 & 63.71 & \textbf{82.65} & 18.94 \\
DAT* Robust   & \textbf{74.68} & 78.82 & \textbf{4.14} 
    & 79.64 & 84.62 & \textbf{4.98}  
    & \textbf{69.00} & 73.85 & \textbf{4.85} \\ 
\bottomrule
\end{tabular}
\label{robust}
\end{table*}

\textbf{Using Different Terms of Aggregation.} We further ablate the two scoring terms used in DAT. The group-prompt score corresponds to the group-specific corrected score $\tilde{s}_{y,a}$. As shown in Figure~\ref{fig:aggregation}, using either $\tilde{s}_{y,a}$ or $\tilde{s}_{y,\mathrm{Avg}}$ alone improves over the baselines, but neither consistently matches the full DAT formulation. The group-specific score $\tilde{s}_{y,a}$ directly corrects subgroup-level density imbalance, but can be less stable because prediction relies only on subgroup-specific corrections. In contrast, the class-marginal score $\tilde{s}_{y,\mathrm{Avg}}$ provides a more balanced aggregation across attributes, but does not fully exploit subgroup-specific corrections. Combining both terms yields the strongest overall worst-group performance, indicating that the two components provide complementary benefits.

% Add to preamble if not already included:
% \usepackage{pgfplots}
% \pgfplotsset{compat=1.18}

\begin{figure}[t]
\centering
\begin{tikzpicture}
\begin{axis}[
    ybar,
    bar width=11pt,
    width=\columnwidth,
    height=4.8cm,
    ymin=0,
    ymax=90,
    ylabel={Score},
    xtick={1,2.4,4.4,5.8},
    xticklabels={WG,Gap,WG,Gap},
    x tick label style={font=\scriptsize},
    y tick label style={font=\scriptsize},
    ylabel style={font=\scriptsize},
    tick label style={font=\scriptsize},
    legend style={
        font=\scriptsize,
        at={(0.5,1.05)},
        anchor=south,
        legend columns=3,
        draw=none
    },
    enlarge x limits=0.15,
    grid=major,
    grid style={dashed,gray!25},
    clip=false
]

\addplot coordinates {
    (1,75.08) (2.4,5.28)
    (4.4,83.33) (5.8,6.42)
};
\addplot coordinates {
    (1,70.56) (2.4,9.93)
    (4.4,74.14) (5.8,15.59)
};
\addplot coordinates {
    (1,72.94) (2.4,8.49)
    (4.4,81.46) (5.8,7.78)
};

\legend{DAT, $\tilde{s}_{y,a}$ only, $\tilde{s}_{y,\mathrm{Avg}}$ only}

% dashed separator between backbones
\draw[densely dashed] (axis cs:3.4,0) -- (axis cs:3.4,90);

% backbone labels under each group
\node[font=\scriptsize, align=center] at (axis cs:1.7,-14) {CLIP\\ViT-B/32};
\node[font=\scriptsize, align=center] at (axis cs:5.1,-14) {CLIP\\ViT-L/14};

\end{axis}
\end{tikzpicture}
\caption{
Aggregation ablation on Waterbirds. The full DAT aggregation outperforms using either corrected score alone.
}
\label{fig:aggregation}
\end{figure}

\textbf{Robust Text Prompt.} 
Following \citet{an2024perceptionclip,lu2025mitigating}, we expand group descriptions by incorporating multiple semantically related variants of spurious attributes (e.g., alternative ways of describing land or water backgrounds in Waterbirds, as generated in \citet{liang2022mind}) and averaging their embeddings. Spurious attribute variants and corresponding group prompts are provided in Tables~\ref{tab:land_water_synonyms} and~\ref{tab:land_water_synonyms2} in Appendix~\ref{appendix:prompt}. This robustified representation is intended to better capture attribute variability and has been shown to improve both WG and Avg accuracy across backbones. As shown in Table~\ref{robust}, however, group-robust prompts result in only marginal changes, minor improvements in some cases, but overall negligible differences. In contrast, the performance differences between DAT* and DAT* Robust are more pronounced. This aligns with findings from \citet{lu2025mitigating}, since DAT* uses the spurious prompts directly to construct group references. Consequently, varying the phrasing of spurious attributes has a greater impact, often improving WG and reducing the GAP, indicating that using multiple semantically aligned prompt variants generally leads to enhanced or comparable performance. A similar trend is observed on the CelebA dataset; corresponding results are reported in Appendix~\ref{robusttextpromptceleba}. 

Additional analyses on reference-set quality and global normalisation baselines are provided in Appendices~\ref{app:reference_quality} and~\ref{app:global_normalisation}.

\section{Conclusion}
We introduced the Density-Aware Translation method, a simple zero-shot mechanism that rescales image-text similarities using a geometric density proxy computed from small group references, addressing the bias introduced by the ellipsoidal shape of CLIP embeddings. Across multiple datasets, models, and settings, we demonstrated that DAT consistently improves performance, raising WG and Avg accuracy while reducing the gap between them. From a theoretical perspective, we showed that DAT’s multiplicative correction leads to an additive discriminant that aligns cosine similarity with a Bayes-style log-likelihood, while reinstating anisotropy-sensitive terms that cosine similarity alone overlooks. Although DAT and its variant DAT* perform strongly compared to prior methods, we believe future work should explore adaptive multimodal approaches for density estimation in the embedding space that are less sensitive to reference selection and prompt design.

\section*{Impact Statement}
This work aims to mitigate spurious correlations in vision–language models to enhance robustness across diverse groups. By reducing sensitivity to spurious correlations, the proposed method aims to improve model performance across underrepresented groups without requiring additional supervision or fine-tuning. The broader societal implications of this work are aligned with those commonly associated with improved robustness and reliability in machine learning systems, and we do not identify any additional ethical concerns that require specific discussion.

\section*{Acknowledgements}
This research was supported by The University of Melbourne’s Research Computing Services, the Petascale Campus Initiative, and the Spartan HPC facilities. This Facility was established with the assistance of LIEF Grant LE170100200. Moreover, this research was supported by the ARC Centre of Excellence for Automated Decision-Making and Society (CE200100005), and funded partially by the Australian Government through the Australian Research Council.
% This work aims to mitigate spurious correlations in vision–language models to enhance robustness across diverse groups. All experiments were performed on publicly available benchmark datasets, in accordance with their respective licenses and usage guidelines. No private, identifiable, or personally sensitive data was used. The research was conducted in full compliance with the ICML Code of Ethics.
% In the unusual situation where you want a paper to appear in the
% references without citing it in the main text, use \nocite

\bibliography{example_paper}
\bibliographystyle{icml2026}

%%%%%%%%%%%%%%%%%%%%%%%%%%%%%%%%%%%%%%%%%%%%%%%%%%%%%%%%%%%%%%%%%%%%%%%%%%%%%%%
%%%%%%%%%%%%%%%%%%%%%%%%%%%%%%%%%%%%%%%%%%%%%%%%%%%%%%%%%%%%%%%%%%%%%%%%%%%%%%%
% APPENDIX
%%%%%%%%%%%%%%%%%%%%%%%%%%%%%%%%%%%%%%%%%%%%%%%%%%%%%%%%%%%%%%%%%%%%%%%%%%%%%%%
%%%%%%%%%%%%%%%%%%%%%%%%%%%%%%%%%%%%%%%%%%%%%%%%%%%%%%%%%%%%%%%%%%%%%%%%%%%%%%%
\newpage
\appendix
\onecolumn
\section*{Appendix}
% \addcontentsline{toc}{section}{Appendix}

% % -----  BEGIN appendix-only table of contents -----
% \addcontentsline{toc}{section}{Appendix}

% \begingroup
% \small
% \setcounter{tocdepth}{5}
% \startcontents
% \printcontents{}{l}{\small}
% \endgroup

\section{Theoretical Analysis and Proofs}
\label{Proof}
\subsection{Proof of proposition~\ref{prop:cosine-kent}}
\begin{proof}
Fix any $t\in(-1,1)$ and $r\in\big(0,\sqrt{1-t^2}\big)$. Define
\[
\gamma_1^\top z_\pm=t,\qquad 
\gamma_2^\top z_+=r,\; \gamma_3^\top z_+=0,\qquad
\gamma_2^\top z_-=0,\; \gamma_3^\top z_-=r,
\]
and take remaining coordinates $0$ so that $z_\pm\in\mathbb{S}^{d-1}$. Then $w^\top z_+=w^\top z_-=t$ (same cosine), while
\[
\log p(z_+)-\log p(z_-)
=\beta\Big[(\gamma_2^\top z_+)^2-(\gamma_3^\top z_+)^2 - \big((\gamma_2^\top z_-)^2-(\gamma_3^\top z_-)^2\big)\Big]
=2\beta r^2\neq 0.
\]
Thus $\log p$ strictly prefers one of $z_\pm$ (sign set by $\beta$), whereas cosine ties them.
\end{proof}

\subsection{Proof of Theorem~\ref{thm:local-bayes}}

\begin{proof}
By the definition of the DAT margin,
\[
m_{y,a}(z) \;=\; \tau\,w_{y,a}^\top z \;-\; \lambda\,\log D_{y,a}(z).
\]
Based on Assumption~\ref{assump:logslof}, which states that for some $\alpha>0$, $\eta\in\mathbb{R}$ and a bounded error $\epsilon_{y,a}(z)$ with $|\epsilon_{y,a}(z)|\le c$,
\[
\log D_{y,a}(z) \;=\; \alpha\big(-\log p_{y,a}(z)\big) \;+\; \eta \;+\; \epsilon_{y,a}(z).
\]
Substituting this into the margin gives
\begin{align*}
m_{y,a}(z)
&= \tau\,w_{y,a}^\top z \;-\; \lambda\Big\{\alpha\big(-\log p_{y,a}(z)\big) + \eta + \epsilon_{y,a}(z)\Big\} \\
&= \tau\,w_{y,a}^\top z \;+\; \alpha\lambda\,\log p_{y,a}(z) \;-\; \lambda\eta \;-\; \lambda\,\epsilon_{y,a}(z).
\end{align*}
Define the remainder
\[
r_{y,a}(z) \;:=\; -\lambda\eta - \lambda\,\epsilon_{y,a}(z),
\]
so that
\[
m_{y,a}(z) \;=\; \tau\,w_{y,a}^\top z \;+\; \alpha\lambda\,\log p_{y,a}(z) \;+\; r_{y,a}(z).
\]
Because $|\epsilon_{y,a}(z)|\le c$, we have the uniform bound
\[
|r_{y,a}(z)| \;\le\; \lambda\big(|\eta| + c\big) \;=:\; B_0.
\]
This proves the stated decomposition and bound.

\smallskip
\textbf{Bayes alignment.}
With equal priors, the Bayes rule ranks groups by $\log p_{y,a}(z)$. For any two groups $g=(y,a)$ and $g'=(y',a')$,
\[
m_g(z) - m_{g'}(z) \;=\; \alpha\lambda\Big(\log p_g(z)-\log p_{g'}(z)\Big)
\;+\; \tau\big(w_g-w_{g'}\big)^\top z \;+\; r_g(z)-r_{g'}(z).
\]
Hence $m$ ranks identically to the Bayes score whenever the Bayes gap dominates the bounded perturbations, e.g.
\[
\alpha\lambda\,\big|\log p_g(z)-\log p_{g'}(z)\big|
\;>\; \big|\tau\,(w_g-w_{g'})^\top z\big| \;+\; |r_g(z)|+|r_{g'}(z)|,
\]
and in particular when $\tau=0$ (or the $\tau$-term is treated as a fixed bias) and $B_0$ is small. Thus, the DAT discriminant equals a similarity term plus a (scaled) log-likelihood term up to a bounded remainder, and is Bayes-aligned in the argmax under equal priors in the sense above.
\end{proof}

\subsection{Proof of Corollary~\ref{cor:dat-kent}}
\begin{proof}[Proof sketch]
By Theorem~\ref{thm:local-bayes},
\(m(z)=\tau\,w^\top z+\alpha\lambda\,\log p(z)+r(z)\).
Substitute the Kent log-density; collect linear ($\propto \gamma_1^\top z$), quadratic ($\propto (\gamma_2^\top z)^2-(\gamma_3^\top z)^2$), and constant terms. 
Absorb bounded $r(z)$ and constants into a bias; these do not change the argmax. 
\end{proof}

% \begin{proof}[Proof sketch]
% For $\ell_{\log}(t)=\log(1+e^{-t})$ with $\ell'_{\log}(t)=-\sigma(-t)$,
% \[
% \frac{d}{d\theta} R^{(g)}_{\log}(m_\theta)
% = \mathbb{E}\!\left[\ell'_{\log}\!\big(Y m_\theta(Z)\big)\, Y\big(\alpha\lambda L(Z)+r(Z)\big) \,\big|\, G=g\right]
% = -\alpha\lambda\,\mathbb{E}\!\left[\sigma(-Y m_\theta)\, Y L \,\big|\, G=g\right] + \mathbb{E}[|r(Z)|]
% \le -\alpha\lambda c_g + B_g < 0.
% \]
% Integrate $\theta\in[0,1]$. For $\ell_{\text{hinge}}(t)=\max\{0,1-t\}$, the a.e. subderivative yields
% \[
% \frac{d}{d\theta} R^{(g)}_{\text{hinge}}(m_\theta)
% \in -\,\mathbb{E}\!\left[\mathbf{1}\{Y m_\theta(Z)<1\}\, Y\big(\alpha\lambda L(Z)+r(Z)\big)\, \big|\, G=g\right]
% \le -\alpha\lambda c_g + B_g < 0,
% \]
% then integrate.
% \end{proof}

% --- If not already in your preamble ---
% \usepackage{amsthm}
% \newtheorem{theorem}{Theorem}[section]
% \newtheorem{lemma}{Lemma}[section]
% \newtheorem{corollary}{Corollary}[section]
% \newtheorem{assumption}{Assumption}[section]

% -------- Main-text pointer (drop this sentence where you introduce theory) --------

% ======================= Appendix =======================
\subsection{Groupwise Surrogate-Risk Improvement}
\label{app:surrogate}

For a group $g$ (e.g., a specific $(y,a)$), let $Z\in\mathbb{S}^{d-1}$ denote its image embedding and $Y\in\{\pm 1\}$ the group label in a one-vs-rest reduction.\footnote{The multiclass (multi-group) case follows by standard one-vs-rest aggregation; we state the binary reduction for clarity.}
Recall from Theorem~\ref{thm:local-bayes} (main text) the DAT margin decomposition
\[
m(Z)\;=\; m_0(Z)\;+\;\alpha\lambda\,L(Z)\;+\;r(Z),
\qquad 
m_0(Z):=\tau\,w_g^\top Z,\quad L(Z):=\log p_g(Z),
\]
where $\alpha\lambda>0$ is the density weight and $r$ is a bounded remainder.

For a convex surrogate $\ell$, the groupwise surrogate risk is
\[
R^{(g)}_{\ell}(m)\;:=\;\mathbb{E}\big[\ell\big(Y\,m(Z)\big)\,\big|\,G=g\big],
\qquad 
\ell\in\{\ell_{\log},\,\ell_{\mathrm{hinge}}\},
\]
with $\ell_{\log}(t)=\log(1+e^{-t})$ and $\ell_{\mathrm{hinge}}(t)=\max\{0,\,1-t\}$.

\begin{assumption}[Hard-mass nondegeneracy]
\label{ass:hard-mass-app}
There exists $c_g>0$ such that for the path
\[
m_\theta(Z)\;:=\;m_0(Z)\;+\;\theta\big(\alpha\lambda\,L(Z)+r(Z)\big),\qquad \theta\in[0,1],
\]
we have, for all $\theta\in[0,1]$,
\[
\mathbb{E}\!\left[\sigma\!\big(-Y\,m_\theta(Z)\big)\,Y\,L(Z)\,\big|\,G=g\right]\;\ge\;c_g
\quad\text{and}\quad
\mathbb{E}\!\left[\mathbf{1}\{Y\,m_\theta(Z)<1\}\,Y\,L(Z)\,\big|\,G=g\right]\;\ge\;c_g,
\]
where $\sigma(t)=1/(1+e^{-t})$ is the logistic sigmoid, and there exists $B_g<\infty$ such that $\lvert r(Z)\rvert\le B_g$ almost surely.
\end{assumption}

\begin{theorem}[Strict groupwise surrogate-risk decrease]
\label{thm:grp-decrease-app}
Under Assumptions~\ref{ass:hard-mass-app}, if $\alpha\lambda\,c_g>B_g$, then for $\ell\in\{\text{logistic},\text{hinge}\}$,
\[
R^{(g)}_{\ell}(m)\;<\;R^{(g)}_{\ell}(m_0).
\]
\end{theorem}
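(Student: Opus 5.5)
The approach is to interpolate between the raw margin $m_0$ and the DAT margin $m=m_1$ along the path $m_\theta$ of Assumption~\ref{ass:hard-mass-app}. Set $F(\theta):=R^{(g)}_\ell(m_\theta)$ and $\Delta(Z):=\alpha\lambda L(Z)+r(Z)$, so that $m_\theta=m_0+\theta\Delta$. The goal is to show $F(1)-F(0)=\int_0^1 F'(\theta)\,d\theta$ with $F'(\theta)\le -\alpha\lambda c_g+B_g<0$ for (almost) every $\theta$. Integrating then gives $R^{(g)}_\ell(m)-R^{(g)}_\ell(m_0)\le-(\alpha\lambda c_g-B_g)<0$, which is the claim. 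Throughout I will assume, as is implicit in Assumption~\ref{ass:hard-mass-app} (whose expectations must be finite), that $\mathbb{E}[|L(Z)|\mid G=g]<\infty$.

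\textbf{Logistic case.} We have $\ell_{\log}'(t)=-\sigma(-t)$, so pointwise in $Z$
\[
\tfrac{d}{d\theta}\ell_{\log}\big(Ym_\theta(Z)\big)=-\sigma\big(-Ym_\theta(Z)\big)\,Y\,\Delta(Z).
\]
This derivative is dominated by $|\Delta(Z)|\le \alpha\lambda|L(Z)|+B_g$, which is integrable and does not depend on $\theta$. Hence differentiation under the expectation is justified by dominated convergence (equivalently, the mean value theorem), and
\[
F'(\theta)=-\alpha\lambda\,\mathbb{E}\big[\sigma(-Ym_\theta)\,Y\,L\mid G=g\big]-\mathbb{E}\big[\sigma(-Ym_\theta)\,Y\,r\mid G=g\big].
\]
The first term is at most $-\alpha\lambda c_g$ by the hard-mass assumption. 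The second term is at most $B_g$ in absolute value, since $0<\sigma<1$, $|Y|=1$ and $|r|\le B_g$ almost surely. So $F'(\theta)\le-\alpha\lambda c_g+B_g$ on $[0,1]$, and the fundamental theorem of calculus finishes this case.

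\textbf{Hinge case.} This is the step I expect to be the main obstacle, because $\ell_{\mathrm{hinge}}$ is not differentiable at $t=1$.

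1. For fixed $Z$, the map $\theta\mapsto\ell_{\mathrm{hinge}}(Ym_\theta(Z))$ is Lipschitz in $\theta$ with constant $|\Delta(Z)|$. It is therefore absolutely continuous, and
\[
\ell_{\mathrm{hinge}}(Ym_1)-\ell_{\mathrm{hinge}}(Ym_0)=\int_0^1 \frac{d}{d\theta}\ell_{\mathrm{hinge}}(Ym_\theta)\,d\theta .
\]
2. If $\Delta(Z)=0$, the map is constant in $\theta$, and the integrand $-\mathbf{1}\{Ym_\theta<1\}Y\Delta(Z)$ is also $0$. If $\Delta(Z)\neq0$, the kink $Ym_\theta(Z)=1$ occurs at exactly one $\theta$. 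In both cases the derivative equals $-\mathbf{1}\{Ym_\theta<1\}\,Y\,\Delta(Z)$ for Lebesgue-a.e.\ $\theta$.
3. The integrand is bounded by $\alpha\lambda|L|+B_g$, which is integrable on $[0,1]\times\Omega$. Fubini therefore lets me swap $\mathbb{E}$ and $\int_0^1$.
4. This yields $F(1)-F(0)=\int_0^1 G(\theta)\,d\theta$, where
\[
G(\theta)=-\alpha\lambda\,\mathbb{E}\big[\mathbf{1}\{Ym_\theta<1\}YL\mid G=g\big]-\mathbb{E}\big[\mathbf{1}\{Ym_\theta<1\}Yr\mid G=g\big].
\]
5. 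The second hard-mass condition bounds the first term by $-\alpha\lambda c_g$, and $|r|\le B_g$ bounds the second term by $B_g$. So $G(\theta)\le-\alpha\lambda c_g+B_g<0$ for every $\theta$, and integrating gives the strict decrease.

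The only delicate points are the measure-zero kink set and the Fubini interchange; both are handled by the integrable envelope $\alpha\lambda|L|+B_g$.
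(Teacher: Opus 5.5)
Your proposal is correct and follows the paper's proof. You interpolate along $m_\theta = m_0 + \theta(\alpha\lambda L + r)$, split the $\theta$-derivative of the groupwise risk into an $L$-part (at most $-\alpha\lambda c_g$ by Assumption~\ref{ass:hard-mass-app}) and an $r$-part (at most $B_g$), and integrate over $\theta\in[0,1]$. Your extra steps --- dominated convergence in the logistic case, pointwise absolute continuity plus Fubini for the hinge kink, and the explicit integrability of $L$ --- make rigorous what the paper only sketches with ``a.e.\ subderivative'' and ``integration finishes the proof.''
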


\begin{proof}

\textbf{Logistic.} 
We have:
\[
\frac{d}{d\theta}R^{(g)}_{\log}(m_\theta)
=\mathbb{E}\!\left[\ell'_{\log}\!\big(Y m_\theta(Z)\big)\,Y\big(\alpha\lambda L(Z)+r(Z)\big)\,\big|\,G=g\right] \\
=-\,\mathbb{E}\!\left[\sigma\!\big(-Y m_\theta(Z)\big)\,Y\big(\alpha\lambda L(Z)+r(Z)\big)\right].
\]

\[
\frac{d}{d\theta}R^{(g)}_{\log}(m_\theta)
= -\,\alpha\lambda\,\mathbb{E}\!\left[\sigma(-Y m_\theta)\,Y\,L\right]\;-\;\mathbb{E}\!\left[\sigma(-Y m_\theta)\,Y\,r\right].
\]
Assumption~\ref{ass:hard-mass-app} gives the first expectation $\ge c_g$, while $\lvert\mathbb{E}[\sigma(-Y m_\theta)\,Y\,r]\rvert\le \mathbb{E}[|r|]\le B_g$ a.s.
Hence $\frac{d}{d\theta}R^{(g)}_{\log}(m_\theta)\le -\alpha\lambda c_g+B_g<0$.

\noindent\textbf{Hinge.} For the hinge loss:
\[
\frac{d}{d\theta}R^{(g)}_{\mathrm{hinge}}(m_\theta)
\in -\,\mathbb{E}\!\left[\mathbf{1}\{Y m_\theta(Z)<1\}\,Y\big(\alpha\lambda L(Z)+r(Z)\big)\,\big|\,G=g\right],
\]
where the right-hand side is any measurable subderivative (a.e.\ well-defined).
\[
\frac{d}{d\theta}R^{(g)}_{\mathrm{hinge}}(m_\theta)
= -\,\alpha\lambda\,\mathbb{E}\!\left[\mathbf{1}\{Y m_\theta<1\}\,Y\,L\right]\;-\;\mathbb{E}\!\left[\mathbf{1}\{Y m_\theta<1\}\,Y\,r\right].
\]
Assumption~\ref{ass:hard-mass-app} lower-bounds the first term by $c_g$, while $\lvert\mathbb{E}[\cdot]\rvert\le B_g$ for the second.
Thus $\frac{d}{d\theta}R^{(g)}_{\mathrm{hinge}}(m_\theta)\le -\alpha\lambda c_g+B_g<0$ and integration finishes the proof.
\end{proof}

% \begin{corollary}[Worst-group improvement and $0$--$1$ error]
% \label{cor:worst-group-app}
% If $\alpha\lambda\,c_g>B_g$ holds for every group $g$ (strict for at least one worst group), then
% \[
% \max_g R^{(g)}_{\ell}(m)\;<\;\max_g R^{(g)}_{\ell}(m_0), \qquad \ell\in\{\text{logistic},\text{hinge}\}.
% \]
% Moreover, for any $g$ and any margin $m$,
% \[
% \mathbb{P}\big(Y\,m(Z)\le 0 \mid G=g\big)\;\le\;R^{(g)}_{\mathrm{hinge}}(m),
% \qquad
% \mathbb{P}\big(Y\,m(Z)\le 0 \mid G=g\big)\;\le\;\frac{1}{\log 2}\,R^{(g)}_{\log}(m).
% \]
% \end{corollary}

% \begin{proof}
% Apply Theorem~\ref{thm:grp-decrease-app} to every group and take maxima.
% For the error bounds: $1_{\{Y m\le 0\}}\le \ell_{\mathrm{hinge}}(Y m)$ pointwise; and $\ell_{\log}(t)\ge \log 2\cdot 1_{\{t\le 0\}}$, since $\ell_{\log}(0)=\log 2$ and $\ell_{\log}$ is decreasing on $(-\infty,0]$.
% \end{proof}

\section{Algorithm and Experimental Details}
\subsection{Algorithm}
\label{app:alg}
The algorithms for DAT and DAT* are presented in Algorithms \ref{alg:DAT} and \ref{alg:DATstar}, respectively. Their primary distinction lies in the construction of the reference sets: DAT assumes direct access to the spurious attribute, whereas DAT* infers it through zero-shot classification. 
% \begin{algorithm}[t]
% \caption{Density-Aware Translation (DAT / DAT*)}
% \label{alg:dat}
% \KwIn{Query image $x_q$, group references $\{R_{y,a}\}$, prompts $\{t_y, t_a, t_{y,a}\}$, 
% hyperparameters $k,\lambda$}
% \KwOut{Predicted class $\hat y_q$}

% \textbf{1. Encode query.} 
% Compute image embedding $z_q = \phi_I(x_q)$.\\

% \textbf{2. Encode prompts.} 
% Compute text embeddings $w_y=\phi_T(t_y)$, $w_a=\phi_T(t_a)$, $w_{y,a}=\phi_T(t_{y,a})$.\\

% \textbf{3. Handle missing bias labels (DAT*).} 
% If subgroup attribute $a$ is unknown, infer
% \[
% \hat a = \arg\max_{a \in \mathcal{A}} \langle z_q, w_a \rangle.
% \]\\

% \textbf{4. Score each subgroup.} 
% For each $(y,a)$:
% \begin{enumerate}
% \item Raw similarity: $s_{y,a}(x_q)=\langle z_q, w_{y,a}\rangle$.
% \item Local density: $D_{y,a}(z_q)=\mathrm{SLOF}(z_q;R_{y,a})$.
% \item Corrected score: 
% $\tilde{s}_{y,a}(x_q)=\tfrac{s_{y,a}(x_q)}{(D_{y,a}(z_q)+\varepsilon)^\lambda}$.
% \end{enumerate}

% \textbf{5. Class-marginal score.} 
% \[
% \tilde{s}_{y,\mathrm{Avg}}(x_q)=\frac{1}{M+1}\Big(\sum_{a\in\mathcal{A}}\tilde{s}_{y,a}(x_q)+s_y(x_q)\Big).
% \]

% \textbf{6. Prediction.} 
% \[
% \hat y_q=\arg\max_{y\in\mathcal{Y}} \max_{a\in\mathcal{A}} \{\tilde{s}_{y,a}(x_q), \tilde{s}_{y,\mathrm{Avg}}(x_q)\}.
% \]
% \label{alg:dat}
% \end{algorithm}

\begin{algorithm}[H]
\caption{DAT}
\label{alg:DAT}
\textbf{Input:} Image $x$, Image encoder $\phi_I(\cdot)$, Text encoder $\phi_T(\cdot)$, Group prompts $\{t_{y,a}\}$, Class-only prompts $\{t_y\}$, Reference sets $\{R_{y,a}\}$.\\  
\textbf{Output:} Predicted label $\hat{y}$.
\begin{algorithmic}[1]
\STATE $z \gets \phi_I(x)$ \COMMENT{Compute image embedding}
\FOR{each group $(y,a)$}
  \STATE $s_{y,a}(x) \leftarrow \langle z, \phi_T(t_{y,a})\rangle$ \COMMENT{Compute group-wise similarity}
 \IF{$|R_{y,a}| < n$}
      \STATE $D_{y,a}(z) \gets \infty$ \COMMENT{Insufficient reference samples}
  \ELSE
      \STATE $D_{y,a}(z) \leftarrow \mathrm{SLOF}(z;R_{y,a})$ \COMMENT{Local density}
  \ENDIF
  \STATE $\tilde{s}_{y,a}(x) \leftarrow \frac{s_{y,a}(x)}{(D_{y,a}(z)+\varepsilon)^\lambda}$ \COMMENT{DAT correction}
\ENDFOR
\STATE $\tilde{s}_{y,\mathrm{Avg}}(x) \leftarrow \frac{1}{M+1}\Big(\sum_{a\in\mathcal{A}} \tilde{s}_{y,a}(x)+s_y(x)\Big)$ \COMMENT{Aggregate class-marginal score}
\STATE $\hat{y} \gets \arg\max_{y \in \mathcal{Y}} \max_{a \in \mathcal{A}} \{\tilde{s}_{y,a}(x), \tilde{s}_{y,\mathrm{Avg}}(x)\}$
\STATE \textbf{return} $\hat{y}$
\end{algorithmic}
\end{algorithm}

\begin{algorithm}[H]
\caption{DAT*}
\label{alg:DATstar}
\textbf{Input:} Image $x$, Image encoder $\phi_I(\cdot)$, Text encoder $\phi_T(\cdot)$, Group prompts $\{t_{y,a}\}$, Class-only prompts $\{t_y\}$, Spurious prompts $\{t_y\}$, Reference sets $\{R_{y,a}\}$.\\  
\textbf{Output:} Predicted label $\hat{y}$.
\begin{algorithmic}[1]
\STATE $z \gets \phi_I(x)$ \COMMENT{Compute image embedding}
\STATE $\hat a = \arg\max_{a \in \mathcal{A}} \langle z, w_a \rangle.$ \COMMENT{Infer spurious attributes}
\STATE Make the reference sets $\{R_{y,a}\}$
\FOR{each group $(y,a)$}
  \STATE $s_{y,a}(x) \leftarrow \langle z, \phi_T(t_{y,a})\rangle$ \COMMENT{Compute group-wise similarity}
   \IF{$|R_{y,a}| < n$}
      \STATE $D_{y,a}(z) \gets \infty$ \COMMENT{Insufficient reference samples}
  \ELSE
      \STATE $D_{y,a}(z) \leftarrow \mathrm{SLOF}(z;R_{y,a})$ \COMMENT{Local density}
  \ENDIF
  \STATE $\tilde{s}_{y,a}(x) \leftarrow \frac{s_{y,a}(x)}{(D_{y,a}(z)+\varepsilon)^\lambda}$ \COMMENT{DAT correction}
\ENDFOR
\STATE $\tilde{s}_{y,\mathrm{Avg}}(x) \leftarrow \frac{1}{M+1}\Big(\sum_{a\in\mathcal{A}} \tilde{s}_{y,a}(x)+s_y(x)\Big)$ \COMMENT{Aggregate class-marginal score}
\STATE $\hat{y} \gets \arg\max_{y \in \mathcal{Y}} \max_{a \in \mathcal{A}} \{\tilde{s}_{y,a}(x), \tilde{s}_{y,\mathrm{Avg}}(x)\}$
\STATE \textbf{return} $\hat{y}$
\end{algorithmic}
\end{algorithm}

% \begin{algorithm}[t]
% \caption{DAT}\label{alg:dat}
% \KwIn{Image $x$, Image encoder $\phi_I(\cdot)$, Text encoder $\phi_T(\cdot)$, 
%        Group prompts $\{t_{y,a}\}$, Class-only prompts $\{t_y\}$, 
%        Reference sets $\{R_{y,a}\}$.}
% \KwOut{Predicted label $\hat{y}$.}

% Compute image embedding: $z_q \leftarrow \phi_I(x)$\; 

% \For{each $(y,a)$}{
%   Compute similarity: $s_{y,a}(x) \leftarrow \langle z_q, \phi_T(t_{y,a})\rangle$\;
%   Estimate density: $D_{y,a}(z_q) \leftarrow \mathrm{SLOF}(z_q;R_{y,a})$\;
%   Translate score: $\tilde{s}_{y,a}(x) \leftarrow \frac{s_{y,a}(x)}{(D_{y,a}(z_q)+\varepsilon)^\lambda}$\;
% }

% Aggregate class-marginal score: $\tilde{s}_{y,\mathrm{Avg}}(x) \leftarrow \frac{1}{M+1}\Big(\sum_{a\in\mathcal{A}} \tilde{s}_{y,a}(x)+s_y(x)\Big)$\; 

% Final prediction: $\hat{y} \leftarrow \arg\max_{y\in\mathcal{Y}} \max\big\{\tilde{s}_{y,a}(x), \tilde{s}_{y,\mathrm{Avg}}(x)\big\}$\; 

% \Return $\hat{y}$\;

% \end{algorithm}

\begin{figure*}[t]
\centering

% -------- top-left --------
\begin{subfigure}[t]{0.48\textwidth}
\centering
\begin{tikzpicture}
\begin{axis}[
  width=\textwidth, height=0.5\textwidth,
  axis lines*=left, axis line style={-Stealth},
  ymin=0, ymajorticks=false, yticklabels={},
  xlabel={Value}, ylabel={Density},
  legend style={draw=none, fill=none, font=\small},
  legend pos=north west, grid=none, clip=false
]
  % LW (blue-ish)
  \addplot[smooth, thick, draw=blue!60, fill=blue!40, fill opacity=0.35, mark=none]
    table[col sep=tab, x=x, y=LW]{vis/density_LW_WL.dat};
  \addlegendentry{LW}

  % WL (red-ish)
  \addplot[smooth, thick, draw=red!60, fill=red!40, fill opacity=0.35, mark=none]
    table[col sep=tab, x=x, y=WL]{vis/density_LW_WL.dat};
  \addlegendentry{WL}
\end{axis}
\end{tikzpicture}
\caption{Distribution of zero-shot score.}
\label{fig:dens-a}
\end{subfigure}
\hfill
% -------- top-right --------
\begin{subfigure}[t]{0.48\textwidth}
\centering
\begin{tikzpicture}
\begin{axis}[
  width=\textwidth, height=0.5\textwidth,
  axis lines*=left, axis line style={-Stealth},
  ymin=0, ymajorticks=false, yticklabels={},
  xlabel={Value}, ylabel={Density},
  legend style={at={(0.7,0.5)}, anchor=west, draw=none, fill=none, font=\small},
  grid=none, clip=false
]
  % LW (blue-ish)
  \addplot[smooth, thick, draw=blue!60, fill=blue!40, fill opacity=0.35, mark=none]
    table[col sep=tab, x=x, y=LW]{vis/density_LW_WL_SLOF.dat};
  \addlegendentry{LW}

  % WL (red-ish)
  \addplot[smooth, thick, draw=red!60, fill=red!40, fill opacity=0.35, mark=none]
    table[col sep=tab, x=x, y=WL]{vis/density_LW_WL_SLOF.dat};
  \addlegendentry{WL}
\end{axis}
\end{tikzpicture}
\caption{Distribution of SLOF scores.}
\label{fig:dens-b}
\end{subfigure}

\vspace{6pt}

% -------- bottom (centered) --------
\begin{subfigure}[H]{0.62\textwidth}
\centering
\begin{tikzpicture}
\begin{axis}[
  width=\textwidth, height=0.5\textwidth,
  axis lines*=left, axis line style={-Stealth},
  ymin=0, ymajorticks=false, yticklabels={},
  xlabel={Value}, ylabel={Density},
  legend style={at={(0.8,0.5)}, anchor=west, draw=none, fill=none, font=\small},
  grid=none, clip=false
]
  % Ours (blue-ish)
  \addplot[smooth, thick, draw=blue!60, fill=blue!40, fill opacity=0.5, mark=none]
    table[col sep=tab, x=x, y=Ours]{vis/margin.dat};
  \addlegendentry{DAT}

  % Init (red-ish)
  \addplot[smooth, thick, draw=red!60, fill=red!40, fill opacity=0.35, mark=none]
    table[col sep=tab, x=x, y=Init]{vis/margin.dat};
  \addlegendentry{Zeroshot}
\end{axis}
\end{tikzpicture}
\caption{Score difference distributions.}
\label{fig:dens-c}
\end{subfigure}

\caption{ Density comparisons on Waterbirds with CLIP ViT-L/14.  
    (a) For WL images (waterbird on land), raw cosine similarity to the WL vs LW prompts shows heavy overlap, and cosine alone cannot reliably separate the true group from its spuriously aligned counterpart. (b) SLOF values for the same WL images, measured against WL vs LW references; WL assigns a higher SLOF (sparser) than LW, revealing density asymmetry. (c) Across the dataset, DAT enlarges the max–min group score gap relative to the baseline, yielding more decisive predictions.
    }
\label{fig:density-triptych}
\end{figure*}

\subsection{Reference-set construction}
\label{app:herd}
Given a group pool of images $X_{y,a}=\{x^{(h)}_{y,a}\}_{h=1}^{N_{y,a}}$, let the (unit–norm) image
embeddings be $z^{(h)}_{y,a}=\phi_I(x^{(h)}_{y,a})/\|\phi_I(x^{(h)}_{y,a})\|_2$ and define
$\mathcal{G}^{feat}_{y,a}=\{z^{(h)}_{y,a}\}_{h=1}^{N_{y,a}} \subset \mathbb{R}^d$. The group mean in feature
space is
\[
\mu_{y,a} \;=\; \frac{1}{N_{y,a}}\sum_{h=1}^{N_{y,a}} z^{(h)}_{y,a}.
\]
Starting from an empty reference set $R^{(0)}_{y,a}=\varnothing$ with running sum
$S_0=\mathbf{0}\in\mathbb{R}^d$, we select a uniform budget of $n$ exemplars by greedy
feature-space herding:
\[
p^{(y,a)}_k \;\in\;
\arg\min_{\,z\in \mathcal{G}_{y,a}\setminus R^{(k-1)}_{y,a}}
\left\|
\mu_{y,a} \;-\; \frac{1}{k}\!\left(z\;+\;\sum_{j=1}^{k-1} p^{(y,a)}_j\right)
\right\|_2,
\qquad k=1,\dots,n,
\]
where we set $R^{(k)}_{y,a}=R^{(k-1)}_{y,a}\cup\{p^{(y,a)}_k\}$ and
$S_{k-1}=\sum_{j=1}^{k-1} p^{(y,a)}_j$. Equivalently, the selection can be written as
\[
p^{(y,a)}_k \;\in\;
\arg\max_{\,z\in \mathcal{G}^{feat}_{y,a}\setminus R^{(k-1)}_{y,a}}
\Big(
\langle z_{y,a},\,\mu_{y,a}\rangle \;-\; \tfrac{1}{k}\,\langle z_{y,a},\,S_{k-1}\rangle
\Big).
\]
We break ties deterministically (by smallest index) for reproducibility.

\begin{table}[H]
\centering
\caption{Prompts details used for different datasets.}
\label{tab:prompts}
\begin{tabular}{l p{3.5cm} p{3.5cm} p{3.5cm}}
\toprule
\textbf{Dataset} & \textbf{Label prompts} & \textbf{Spurious prompts} & \textbf{Group prompts} \\
\midrule
Waterbirds & \texttt{a photo of a landbird, a photo of a waterbird} & \texttt{a photo with a water background, a photo with a land background} & \texttt{a photo of a landbird in water, a photo of a waterbird in land, a photo of a landbird in land, a photo of a waterbird in water} \\
CelebA & \texttt{a photo of a celebrity with dark hair, a photo of a celebrity with blonde hair} & \texttt{a photo of a female, a photo of a male} & \texttt{a photo of a male celebrity with dark hair, a photo of a female celebrity with blonde hair, a photo of a Female celebrity with dark hair, a photo of a Male celebrity with blonde hair} \\
COVID-19 & \texttt{An X-ray image of a chest without Pneumonia, An X-ray image of a chest with Pneumonia} & \texttt{An X-ray image from a female, An X-ray image from a male} & \texttt{an X-ray image of a chest without Pneumonia from a male,  an X-ray image of a chest with Pneumonia from a female, an X-ray image of a chest without Pneumonia from a female,  an X-ray image of a chest with Pneumonia from a male} \\
FMoW & \texttt{A satellite image of a/an ${y_i}_{i=0}^{i=61}$} & \texttt{Over Europe, Over Asia, Over Americas, Over Africa, Over Oceania} & \texttt{A satellite image of a/an ${y_i}_{i=0}^{i=61}$ over Europe, [A satellite image of a/an ${y_i}_{i=0}^{i=61}$ over Asia], [A satellite image of a/an ${y_i}_{i=0}^{i=61}$ over Americas], [A satellite image of a/an ${y_i}_{i=0}^{i=61}$ over Africa, [A satellite image of a/an ${y_i}_{i=0}^{i=61}$ over Oceania]} \\
\bottomrule
\end{tabular}
\end{table}

\subsection{Zero-shot in Related Papers}
\label{app:ZS}
In DAT, samples from the target domain are never observed during training, which is fully consistent with the standard definition of zero-shot learning in the literature \citep{wang2019survey, lampert2013attribute, palatucci2009zero}. The DAT* variant further allows construction of the reference set by inferring spurious features, without requiring target-domain supervision.

Our approach aligns with prior zero-shot spurious-correlation mitigation methods \citep{lu2025mitigating, an2024perceptionclip}, while differing in its assumptions and reference-set construction. 

% In particular, \textbf{TIE} \citep{lu2025mitigating} explicitly relies on embeddings from the entire target-domain training set to compute translation parameters, as evidenced by its public implementation. 

% \textbf{PerceptionCLIP} \citep{an2024perceptionclip} depends on contextual attribute construction, requiring access to large-scale external image–text corpora (e.g., LAION-400M) and the in-context reasoning capabilities of large language models. 

% Similarly, \textbf{Zero-Shot Robustification} \citep{adila2024roboshot}, in its label-free adaptation variant, still assumes access to a small reference set.

In particular, \textbf{TIE} \citep{lu2025mitigating} computes translation scales from image embeddings over the target-domain training split, as reflected in its public implementation.

\textbf{PerceptionCLIP} \citep{an2024perceptionclip} depends on contextual attribute construction; its semi-automated attribute-construction procedure can use large-scale image--text retrieval and the in-context reasoning capabilities of large language models.

\textbf{Zero-Shot Robustification} \citep{adila2024roboshot}, in its label-free adaptation variant, assumes access to an unlabeled training set and a small labeled validation set to learn an additional embedding-space projection.

%In contrast, 
DAT uses only small, fixed reference exemplar sets constructed once via deterministic feature-space herding, and does not require large external data sources, global contextual retrieval, or embeddings from the full training set. As a result, the assumptions underlying DAT and DAT* are fully compatible with the zero-shot evaluation protocol adopted in prior work, and comparisons with TIE, PerceptionCLIP, and related baselines remain fair.

\subsection{Datasets}
\label{app:dataset}
Our method, along with all baselines, is evaluated in the data sets listed below.
\begin{itemize}
    \item \textbf{Waterbirds} \citep{koh2021wilds,sagawa2019distributionally}: 
    A binary bird classification task with labels 
    $y \in \{\text{Landbird}, \text{Waterbird}\}$. 
    The spurious attribute is the background type 
    $a \in \{\text{Land}, \text{Water}\}$, where most landbirds appear on land and most waterbirds over water. 
    This produces four groups: Landbird-Land, Landbird-Water, Waterbird-Land, and Waterbird-Water. 
    
    \item \textbf{CelebA} \citep{liu2015face}: 
    A large-scale face dataset ($200$K+ images) used for binary hair color classification 
    $y \in \{\text{Dark hair}, \text{Blonde hair}\}$. 
    Gender $a \in \{\text{Female}, \text{Male}\}$ is spuriously correlated with hair color, $94\%$ of blond-labeled images are female. 
    Groups are: Female--Dark, Female--Blonde, Male--Dark, and Male--Blonde. 
    
    % \item \textbf{ISIC} \citep{codella2019skin,wu2023}: 
    % A medical imaging dataset for skin cancer detection, with labels 
    % $y \in \{\text{Benign}, \text{Malignant}\}$. 
    % The spurious feature is the presence of color patches 
    % $a \in \{\text{With patch}, \text{Without patch}\}$. 
    % Groups include: Benign--With patch, Benign--Without patch, and Malignant--Without patch. 
    
    \item \textbf{COVID-19} \citep{cohen2020covid}: 
    An X-ray dataset for pneumonia diagnosis, with task 
    $y \in \{\text{No pneumonia}, \text{Pneumonia}\}$. 
    Gender $a \in \{\text{Male}, \text{Female}\}$ serves as a spurious confounder. 
    Groups include: Male--Pneumonia, Male--No pneumonia, Female--Pneumonia, and Female--No pneumonia. 
    
    \item \textbf{FMoW} \citep{christie2018functional,izmailov2022}: 
    A large-scale satellite dataset with $62$ land-use/building classes. 
    The spurious attribute is the geographical region 
    $a \in \{\text{Africa}, \text{Americas}, \text{Asia}, \text{Europe}, \text{Oceania}\}$. 
    Groups are defined purely by region. 
    FMoW also exhibits a temporal domain shift; training images are collected before 2016, while validation and test images are collected in 2016-2017. 
\end{itemize}

\subsection{Implementation and Reproducibility}
\label{app:details}
\textbf{Details of Prompts.}
For prompts, we followed \citep{liang2022mind} and utilized their proposed templates. We only created group prompts by simply combining two prompts, without adding any extra information. The prompt details are provided in Table~\ref{tab:prompts}.

\textbf{Implementation Efficiency.} We utilized an NVIDIA H100 GPU with frozen weights across all datasets. Table \ref{tab:eff} reports the efficiency on the CelebA dataset, one of the large-scale datasets evaluated in this study. As shown, DAT demonstrates higher efficiency than TIE.

\begin{table}[H]
\centering
\begin{tabular}{lccc}
\toprule
\textbf{Method} & CLIP (ViT-B/32) & CLIP (ViT-L/14) & CLIP (ResNet50) \\
\midrule
TIE & 970 & 947 & 951 \\
DAT & 203 & 148 & 153 \\
\bottomrule
\end{tabular}
\caption{Comparison of computation time (seconds) efficiency using the CelebA dataset.}
\label{tab:eff}
\end{table}

\subsection{Effect of Density Method}
\label{app:densityeffect}
In this section, we study the impact of different density estimation methods within DAT, including SLOF, DAO (Dimensionality-Aware Outlier Detection), and $k$-NN distance. SLOF is used as the default density proxy in DAT and is described in detail in Section~\ref{pipeline}. Here, we provide a comparative analysis to justify this choice.

\textbf{DAO.}
DAO~\citep{anderberg2024dimensionality} extends density-based outlier detection by explicitly incorporating local intrinsic dimensionality into the outlier scoring process. Building upon SLOF, DAO adjusts local density comparisons to account for variations in the underlying data geometry.

Formally, the DAO score for a query point $q$ is defined as
\[
\mathrm{DAO}_k(q) \coloneqq 
\frac{1}{k} \sum_{o \in \mathrm{NN}_k(q)}
\left(
\frac{k\text{-dist}(q)}{k\text{-dist}(o)}
\right)^{\mathrm{LID}^*_{F_o}},
\]
where $k\text{-dist}(\cdot)$ denotes the distance to the $k$-th nearest neighbor, and $\mathrm{LID}^*_{F_o}$ is the estimated local intrinsic dimensionality associated with neighbor $o$. The LID can be estimated using either the method of moments (MoM)~\citep{amsaleg2018extreme} or maximum likelihood estimation (MLE)~\citep{levina2004maximum}. A DAO score greater than one indicates that the query point is likely to be an outlier.

We compare SLOF, DAO (with both MLE and MoM LID estimation), and $k$-NN distance within the DAT framework on Waterbirds using CLIP ViT-B/32. Results are reported in Table~\ref{tab:density_comp}.

DAT aims to correct samples that receive spuriously high zero-shot similarity scores due to alignment with spurious attributes. Therefore, the density proxy should answer the question:
\emph{Is this sample atypical relative to the group reference set, despite exhibiting high similarity to the group prompt?}

SLOF directly measures local sparsity relative to the reference group, effectively capturing whether a sample lies far from the core of the group distribution. In contrast, DAO assesses whether the observed deviation is large relative to the estimated local intrinsic dimensionality, which implicitly assumes that low-dimensional structure corresponds to normality.

As shown in Table~\ref{tab:density_comp}, SLOF consistently outperforms DAO (under both MLE and MoM estimation) and $k$-NN distance across all values of $k$. This behavior is expected in our setting. In spurious correlation benchmarks, many spuriously aligned samples cluster tightly around the group mean, yielding low intrinsic dimensionality despite being semantically misleading. As a result, DAO tends to under-penalize such samples, whereas SLOF correctly assigns high sparsity scores due to their uneven distance distribution within the group.

Similarly, $k$-NN distance alone fails to distinguish between genuinely rare but semantically correct samples and spuriously frequent ones, as it lacks a relative normalization against neighbor densities. Overall, these results support SLOF as a more appropriate density proxy for DAT, as it directly targets non-uniformity in local neighborhood structure—the key signal required to suppress spurious alignment while preserving semantic correctness.

\begin{table}[t]
\centering
\caption{Performance comparison of different density methods on Waterbirds and CLIP (ViT-B/32).}
\label{tab:density_comp}
\begin{tabular}{lccc}
\toprule
\textbf{Method} & \textbf{WG} & \textbf{Avg} & \textbf{Gap} \\
\midrule
SLOF    & \textbf{75.08} & \underline{80.36} & \textbf{5.28} \\
\midrule
DAO (MLE) ($k$=40) & 69.27 & 77.82 & 8.55 \\
DAO (MLE) ($k$=30) & 71.88 & 80.17 & 8.29 \\
DAO (MLE) ($k$=20) & 69.49 & 79.06 & 9.57 \\
DAO (MLE) ($k$=10) & 71.44 & 79.23 & \underline{7.79} \\
\midrule
DAO (MoM) ($k$=40) & 69.27 & 77.75 & 8.48 \\
DAO (MoM) ($k$=30) & 70.64 & 79.62 & 8.98 \\
DAO (MoM) ($k$=20) & 68.07 & 78.37 & 10.30 \\
DAO (MoM) ($k$=10) & 68.33 & 78.09 & 9.76 \\
\midrule
$k$-NN ($k$=40) & 61.06 & 78.67 & 17.61 \\
$k$-NN ($k$=30) & 70.64 & 80.31 & 9.67 \\

$k$-NN ($k$=20) & 70.20 & \textbf{80.55} & 10.33 \\
$k$-NN ($k$=10) & \underline{71.49} & 81.01 & 9.52 \\
\bottomrule
\end{tabular}
\end{table}

\section{Density Effect Visualization.}
\label{app: visualize}
To better illustrate the role of SLOF in mitigating spurious correlations, we provide three complementary visualizations on the Waterbirds dataset with CLIP ViT-L/14.  

In Figure~\ref{fig:dens-a}, we focus on the marginalized group WL (waterbird on land), which is often misclassified as LW(landbird on water). For these WL images, we plot the cosine similarity with respect to both group prompts \textit{``a photo of a waterbird in land"} and \textit{``a photo of a landbird in water"}. The strong overlap in distributions shows that raw cosine similarity based on group prompts alone cannot reliably separate the true group from its spuriously correlated counterpart.  

Figure~\ref{fig:dens-b} shows the SLOF values for the same WL images, this time measured relative to reference sets from WL and LW. Here, the WL references consistently assign higher SLOF values compared to LW, confirming that SLOF captures the sparsity structure of group embeddings and highlights the marginalization of rare groups.  

Finally, Figure~\ref{fig:dens-c} visualizes the distribution of score differences (maximum minus minimum group score) across the dataset, comparing DAT with the uncorrected baseline. DAT widens this gap, producing more confident and reliable predictions.  

Together, these results demonstrate that SLOF not only exposes the density imbalance between rare and spurious groups but also provides a mechanism for DAT to improve the discriminative margin in practice.  

\subsection{Illustrating the Score and SLOF Distribution across Spurious Correlated Groups}

We analyze how DAT influences prediction scores across groups in both the Waterbirds and CelebA datasets. As shown in Figure~\ref{fig:illus_density-triptych1}, the raw cosine similarity scores for the true group (Landbird in Water, LW) and the spuriously aligned group (Waterbird in Water, WW) are closely overlapping. This score overlap explains why LW images can be misclassified as WW. However, examining the SLOF distributions reveals that LW samples are consistently denser under their own group reference set and sparser under WW references. Despite the narrow SLOF range (due to low intra-group variance in Waterbirds), this density asymmetry is sufficient for DAT to recalibrate scores effectively.

A similar pattern appears in CelebA, as illustrated in Figure~\ref{fig:illus_density-triptych2}. The raw scores for the worst group (Dark-Hair Male, DH-M) and its spuriously correlated counterpart (Blond-Hair Male, BH-M) show substantial overlap, reflecting the dataset’s strong hair–gender correlation. In contrast, SLOF scores offer clearer separation, with DH-M images exhibiting lower SLOF values under their correct reference group. The overall SLOF range is wider for CelebA due to higher embedding variance, but DAT’s use of relative SLOF values within each group ensures stable and consistent behavior.

\begin{figure*}[t]
\centering

% -------- top-left --------
\begin{subfigure}[t]{0.48\textwidth}
\centering
\begin{tikzpicture}
\begin{axis}[
  width=\textwidth, height=0.5\textwidth,
  axis lines*=left, axis line style={-Stealth},
  ymin=0, ymajorticks=false, yticklabels={},
  xlabel={Value}, ylabel={Density},
  legend style={draw=none, fill=none, font=\small},
  legend pos=north west, grid=none, clip=false
]
  % LW (blue-ish)
  \addplot[smooth, thick, draw=blue!60, fill=blue!40, fill opacity=0.35, mark=none]
    table[col sep=tab, x=x, y=LW]{vis/density_LW_WW.dat};
  \addlegendentry{LW}

  % WL (red-ish)
  \addplot[smooth, thick, draw=red!60, fill=red!40, fill opacity=0.35, mark=none]
    table[col sep=tab, x=x, y=WW]{vis/density_LW_WW.dat};
  \addlegendentry{WW}
\end{axis}
\end{tikzpicture}
\caption{Distribution of zero-shot score.}
\label{fig:dens-aa}
\end{subfigure}
\hfill
% -------- top-right --------
\begin{subfigure}[t]{0.48\textwidth}
\centering
\begin{tikzpicture}
\begin{axis}[
  width=\textwidth, height=0.5\textwidth,
  axis lines*=left, axis line style={-Stealth},
  ymin=0, ymajorticks=false, yticklabels={},
  xlabel={Value}, ylabel={Density},
  legend style={at={(0.7,0.5)}, anchor=west, draw=none, fill=none, font=\small},
  grid=none, clip=false
]
  % LW (blue-ish)
  \addplot[smooth, thick, draw=blue!60, fill=blue!40, fill opacity=0.35, mark=none]
    table[col sep=tab, x=x, y=LW]{vis/density_LW_WW_SLOF.dat};
  \addlegendentry{LW}

  % WL (red-ish)
  \addplot[smooth, thick, draw=red!60, fill=red!40, fill opacity=0.35, mark=none]
    table[col sep=tab, x=x, y=WW]{vis/density_LW_WW_SLOF.dat};
  \addlegendentry{WW}
\end{axis}
\end{tikzpicture}
\caption{Distribution of SLOF scores.}
\label{fig:dens-bb}
\end{subfigure}

\vspace{6pt}

% -------- bottom (centered) --------
\begin{subfigure}[H]{0.62\textwidth}
\centering
\begin{tikzpicture}
\begin{axis}[
  width=\textwidth, height=0.5\textwidth,
  axis lines*=left, axis line style={-Stealth},
  ymin=0, ymajorticks=false, yticklabels={},
  xlabel={Value}, ylabel={Density},
  legend style={at={(0.8,0.5)}, anchor=west, draw=none, fill=none, font=\small},
  grid=none, clip=false
]
  % Ours (blue-ish)
  \addplot[smooth, thick, draw=blue!60, fill=blue!40, fill opacity=0.35, mark=none]
    table[col sep=tab, x=x, y=LW]{vis/density_LW_WW_ours.dat};
  \addlegendentry{LW}

  % WL (red-ish)
  \addplot[smooth, thick, draw=red!60, fill=red!40, fill opacity=0.35, mark=none]
    table[col sep=tab, x=x, y=WW]{vis/density_LW_WW_ours.dat};
  \addlegendentry{WW}
\end{axis}
\end{tikzpicture}
\caption{DAT Scores.}
\label{fig:dens-cc}
\end{subfigure}

\caption{Illustration of score distributions for the worst group in Waterbirds (Landbird in Water, LW) and its spuriously aligned counterpart (Waterbird in Water, WW). (a) Zero-shot cosine scores for both groups largely overlap, making discrimination difficult. (b) SLOF values show clearer separation, with LW having lower SLOF under its own group references. (c) DAT scores incorporate this density difference and sharpen the separation.
    }
\label{fig:illus_density-triptych1}
\end{figure*}

\begin{figure*}[h]
\centering

% -------- top-left --------
\begin{subfigure}[t]{0.48\textwidth}
\centering
\begin{tikzpicture}
\begin{axis}[
  width=\textwidth, height=0.5\textwidth,
  axis lines*=left, axis line style={-Stealth},
  ymin=0, ymajorticks=false, yticklabels={},
  xlabel={Value}, ylabel={Density},
  legend style={draw=none, fill=none, font=\small},
  legend pos=north west, grid=none, clip=false
]
  % LW (blue-ish)
  \addplot[smooth, thick, draw=blue!60, fill=blue!40, fill opacity=0.35, mark=none]
    table[col sep=tab, x=x, y=LW]{vis/density_celeb_LW_WW.dat};
  \addlegendentry{DH-M}

  % WL (red-ish)
  \addplot[smooth, thick, draw=red!60, fill=red!40, fill opacity=0.35, mark=none]
    table[col sep=tab, x=x, y=WW]{vis/density_celeb_LW_WW.dat};
  \addlegendentry{BH-M}
\end{axis}
\end{tikzpicture}
\caption{Distribution of zero-shot score.}
\label{fig:dens-a2}
\end{subfigure}
\hfill
% -------- top-right --------
\begin{subfigure}[t]{0.48\textwidth}
\centering
\begin{tikzpicture}
\begin{axis}[
  width=\textwidth, height=0.5\textwidth,
  axis lines*=left, axis line style={-Stealth},
  ymin=0, ymajorticks=false, yticklabels={},
  xlabel={Value}, ylabel={Density},
  legend style={at={(0.7,0.5)}, anchor=west, draw=none, fill=none, font=\small},
  grid=none, clip=false
]
  % LW (blue-ish)
  \addplot[smooth, thick, draw=blue!60, fill=blue!40, fill opacity=0.35, mark=none]
    table[col sep=tab, x=x, y=LW]{vis/density_celeb_LW_WW_SLOF.dat};
  \addlegendentry{DH-M}

  % WL (red-ish)
  \addplot[smooth, thick, draw=red!60, fill=red!40, fill opacity=0.35, mark=none]
    table[col sep=tab, x=x, y=WW]{vis/density_celeb_LW_WW_SLOF.dat};
  \addlegendentry{BH-M}
\end{axis}
\end{tikzpicture}
\caption{Distribution of SLOF scores.}
\label{fig:dens-b2}
\end{subfigure}

\vspace{6pt}

% -------- bottom (centered) --------
\begin{subfigure}[H]{0.62\textwidth}
\centering
\begin{tikzpicture}
\begin{axis}[
  width=\textwidth, height=0.5\textwidth,
  axis lines*=left, axis line style={-Stealth},
  ymin=0, ymajorticks=false, yticklabels={},
  xlabel={Value}, ylabel={Density},
  legend style={at={(0.8,0.5)}, anchor=west, draw=none, fill=none, font=\small},
  grid=none, clip=false
]
  % Ours (blue-ish)
  \addplot[smooth, thick, draw=blue!60, fill=blue!40, fill opacity=0.35, mark=none]
    table[col sep=tab, x=x, y=LW]{vis/density_celeb_LW_WW_ours.dat};
  \addlegendentry{DH-M}

  % WL (red-ish)
  \addplot[smooth, thick, draw=red!60, fill=red!40, fill opacity=0.35, mark=none]
    table[col sep=tab, x=x, y=WW]{vis/density_celeb_LW_WW_ours.dat};
  \addlegendentry{BH-M}
\end{axis}
\end{tikzpicture}
\caption{DAT Scores.}
\label{fig:dens-c2}
\end{subfigure}

\caption{Illustration of score distributions for Dark-Hair Male (DH-M) in CelebA, and its spuriously aligned counterpart (Blond-Hair Male, BH-M). (a) Zero-shot cosine scores for both groups largely overlap, making discrimination difficult. (b) SLOF values show clearer separation, with DH-M having lower SLOF under its own group references. (c) DAT scores incorporate this density difference and sharpen the separation.}
\label{fig:illus_density-triptych2}
\end{figure*}

\subsection{Comparison with Global Normalisation}
\label{app:global_normalisation}

We further compare DAT with two simple global normalisation baselines, mean-centering (MC) and whitening (W). These methods reduce global distributional differences between image and text embeddings when sufficient samples from both modalities are available. However, DAT addresses a different source of bias by modelling local density differences between subgroups. The misalignment shown in Figure~\ref{fig:motivation} arises from subgroup geometry rather than only global modality misalignment. Therefore, global normalisation alone may not correct the relative structure that causes spurious alignment.

Table~\ref{tab:global_normalisation} reports the results on Waterbirds. Mean-centering provides moderate improvements over zero-shot and group-prompt baselines, indicating that some global bias exists. Whitening does not consistently improve performance and in some cases, reduces average accuracy, likely because it removes global anisotropic structure that may also contain useful semantic information. In contrast, DAT achieves the highest worst-group accuracy across all backbones. These results show that although global normalisation can partially reduce bias, it does not address the subgroup-level geometric imbalance that DAT is designed to correct.

\begin{table}[H]
\centering
\small
\setlength{\tabcolsep}{3.5pt}
\renewcommand{\arraystretch}{0.95}
\caption{
Comparison with global normalisation baselines on Waterbirds. MC denotes mean-centering and W denotes whitening. DAT achieves the highest worst-group accuracy across backbones.
}
\begin{tabular}{lccc ccc ccc}
\toprule
\multirow{2}{*}{Method}
& \multicolumn{3}{c}{CLIP (ViT-B/32)}
& \multicolumn{3}{c}{CLIP (ViT-L/14)}
& \multicolumn{3}{c}{CLIP (ResNet50)} \\
\cmidrule(lr){2-4} \cmidrule(lr){5-7} \cmidrule(lr){8-10}
& WG & Avg & Gap
& WG & Avg & Gap
& WG & Avg & Gap \\
\midrule
ZS              & 41.37 & 68.48 & 27.11 & 31.93 & 83.72 & 51.79 & 35.36 & 80.64 & 45.28 \\
ZS+MC           & 47.19 & 70.52 & 23.33 & 53.66 & 75.89 & 22.23 & 45.76 & 70.12 & 24.36 \\
ZS+W            & 45.59 & 51.86 & 6.27  & 47.20 & 52.31 & 5.11  & 51.53 & 53.07 & 1.54  \\
Group Prompt    & 43.46 & 68.48 & 27.11 & 31.93 & 83.72 & 51.79 & 35.36 & 80.64 & 45.28 \\
Group Prompt+MC & 58.88 & 78.82 & 29.94 & 66.51 & 83.03 & 16.52 & 58.41 & 78.18 & 19.77 \\
Group Prompt+W  & 52.24 & 55.01 & 2.77  & 47.66 & 52.33 & 4.67  & 51.04 & 53.97 & 2.93  \\
\midrule
DAT             & \textbf{75.08} & \textbf{80.36} & 5.28
                & \textbf{83.33} & \textbf{89.57} & 6.42
                & \textbf{75.08} & \textbf{83.83} & 8.75 \\
\bottomrule
\end{tabular}
\label{tab:global_normalisation}
\end{table}

\subsection{Sensitivity to Reference-Set Quality}
\label{app:reference_quality}

We further evaluate the sensitivity of DAT to the quality of the reference set. In particular, we corrupt the group assignments of reference samples to simulate mislabeled or noisy group references, and evaluate DAT under different noise levels. As shown in Table~\ref{tab:reference_quality}, DAT remains stable when the noise level increases from 0\% to 15\%. The worst-group accuracy changes only slightly across backbones, and the average accuracy also shows minor fluctuations. Although the gap varies at some noise levels, there is no consistent degradation trend as noise increases. These results indicate that DAT does not strongly depend on perfectly clean group references and can tolerate moderate reference-set noise.

\begin{table*}[t]
\centering
\small
\setlength{\tabcolsep}{3.5pt}
\renewcommand{\arraystretch}{0.95}
\caption{
Sensitivity to reference-set quality on Waterbirds. DAT remains stable under moderate group-assignment noise in the reference set.
}
\begin{tabular}{lccc ccc ccc}
\toprule
\multirow{2}{*}{Noise}
& \multicolumn{3}{c}{CLIP (ViT-B/32)}
& \multicolumn{3}{c}{CLIP (ViT-L/14)}
& \multicolumn{3}{c}{CLIP (ResNet50)} \\
\cmidrule(lr){2-4} \cmidrule(lr){5-7} \cmidrule(lr){8-10}
& WG & Avg & Gap
& WG & Avg & Gap
& WG & Avg & Gap \\
\midrule
0\%  & 75.08 & 80.36 & 5.28 & 83.33 & 89.57 & 6.42 & 75.08 & 83.83 & 8.75 \\
5\%  & 74.14 & 81.48 & 7.34 & 80.37 & 90.27 & 9.90 & 74.77 & 83.38 & 8.61 \\
10\% & 74.61 & 80.67 & 6.06 & 83.49 & 89.45 & 5.96 & 74.61 & 86.81 & 12.20 \\
15\% & 76.01 & 81.62 & 5.61 & 83.65 & 89.37 & 5.72 & 74.29 & 83.79 & 9.50 \\
\bottomrule
\end{tabular}
\label{tab:reference_quality}
\end{table*}

% \subsection{Extended Experiments}
\section{Robust Text Prompt}
\subsection{Robust Text Prompt Details - Waterbirds}
\label{appendix:prompt}
To evaluate group-robust text prompts, we follow \citet{lu2025mitigating} and adopt their evaluated sentences, generated with GPT-4 \citep{openaichatgpt}. For Waterbirds, we generate robustified prompts by creating multiple variants of land and water descriptions to represent the spurious attribute (Table~\ref{tab:land_water_synonyms}), along with corresponding group prompts (Table~\ref{tab:land_water_synonyms2}). %As shown in Table~\ref{robust2}, the performance differences between DAT* and DAT* Robust are more pronounced. This aligns with findings from \citet{lu2025mitigating}, since DAT* uses the spurious prompts directly to construct group references. Consequently, varying the phrasing of spurious attributes has a greater impact, often improving WG and reducing the GAP, indicating that using multiple semantically aligned prompt variants generally leads to enhanced or comparable performance in terms of WG and GAP.

\begin{table}[H]
\centering
\begin{adjustbox}{max width=\linewidth}
\begin{tabular}{ll}
\toprule
\textbf{Land} & \textbf{Water} \\ 
\midrule
\texttt{A photo of a land background}   & \texttt{A photo of a water background}   \\
\texttt{A photo of a forest background}   & \texttt{A photo of an ocean background}    \\
\texttt{A photo of a mountain background} & \texttt{A photo of a sea background}       \\
\texttt{A photo of a terrain background}  & \texttt{A photo of a lake background}      \\
\texttt{A photo of a ground background}   & \texttt{A photo of a river background}     \\
\bottomrule
\end{tabular}
\end{adjustbox}
\caption{Spurious prompts used for robust prompt prompt evaluation of Waterbirds dataset \citep{lu2025mitigating}.}
\label{tab:land_water_synonyms}
\end{table}

\begin{table}[H]
\centering
\begin{adjustbox}{max width=\linewidth}
\begin{tabular}{ll}
\toprule
\textbf{Landbird - Land} & \textbf{Waterbird - Water} \\ 
\midrule
\texttt{A photo of a landbird in land} & \texttt{A photo of a waterbird in water}   \\
\texttt{A photo of a landbird in forest}   & \texttt{A photo of waterbird in ocean}    \\
\texttt{A photo of a landbird in mountain}  & \texttt{A photo of a waterbird in sea}        \\
\texttt{A photo of a landbird in terrain}   & \texttt{A photo of a waterbird in lake}       \\
\texttt{A photo of a landbird in ground}    & \texttt{A photo of a waterbird in river}      \\
\bottomrule
\toprule
\textbf{Landbird - Water} & \textbf{Waterbird - Land} \\ 
\midrule
\texttt{A photo of a landbird in water} & \texttt{A photo of a waterbird in land}   \\
\texttt{A photo of a landbird in ocean}   & \texttt{A photo of waterbird in forest}    \\
\texttt{A photo of a landbird in sea}  & \texttt{A photo of a waterbird in mountain}        \\
\texttt{A photo of a landbird in lake}   & \texttt{A photo of a waterbird in terrain}       \\
\texttt{A photo of a landbird in river}    & \texttt{A photo of a waterbird in ground}      \\
\bottomrule
\end{tabular}
\end{adjustbox}
\caption{Group prompts used for robust prompt evaluation of Waterbirds dataset.}
\label{tab:land_water_synonyms2}
\end{table}

% \begin{table}[H]
% \centering
% \caption{Group robustify prompting evaluation using the Waterbirds dataset.}
% \begin{tabular}{lccc ccc ccc}
% \toprule
% \multirow{2}{*}{Method} & 
% \multicolumn{3}{c}{ViT-B/32} & 
% \multicolumn{3}{c}{ViT-L/14} & 
% \multicolumn{3}{c}{ResNet-50} \\
% \cmidrule(lr){2-4} \cmidrule(lr){5-7} \cmidrule(lr){8-10}
% & WG & Avg & Gap
% & WG & Avg & Gap
% & WG & Avg & Gap \\
% \midrule
% DAT*          & 64.02 & 82.33 & 18.31 & 79.75 & 87.87 & 8.12 & 63.71 & 82.65 & 18.94 \\
% DAT* Robust   & 74.68 & 78.82 & 4.14 
%     & 79.64 & 84.62 & 4.98  
%     & 69.00 & 73.85 & 4.85 \\ 

% \bottomrule
% \end{tabular}
% \label{robust2}
% \end{table}

\subsection{Robust Text Prompt for CelebA}
\label{robusttextpromptceleba}
We further evaluate the robustness of \textsc{DAT} and \textsc{DAT*} to prompt variability on the CelebA dataset. For this setting, we construct group-aligned prompts by generating multiple variants of the spurious attribute using GPT-4 \citep{openaichatgpt}, as shown in Table~\ref{tab:female_male_synonyms}. We also evaluate group prompts based on female and male variants (Table~\ref{tab:female_male_synonyms2}). These variants are averaged to form robust text embeddings. As reported in Table~\ref{tab:dat_prompt_robustness_celeb}, \textsc{DAT} behaves similarly, with only minor changes in WG and Avg accuracy. On the other hand, Table~\ref{tab:dat_star_prompt_robustness_celb} shows \textsc{DAT*} improvement, as prompt semantics influence the inferred spurious attribute and its alignment to image embeddings.

\begin{table}[H]
\centering
\caption{Group robustify prompting evaluation for \textsc{DAT} on the CelebA.}
\begin{tabular}{lccc ccc ccc}
\toprule
\multirow{2}{*}{Method} & 
\multicolumn{3}{c}{ViT-B/32} & 
\multicolumn{3}{c}{ViT-L/14} & 
\multicolumn{3}{c}{ResNet-50} \\
\cmidrule(lr){2-4} \cmidrule(lr){5-7} \cmidrule(lr){8-10}
& WG & Avg & Gap
& WG & Avg & Gap
& WG & Avg & Gap \\
\midrule
DAT          & 78.53 & 87.09 & 8.56 
             & 85.35 & 86.54 & 1.19 
             & 80.79 & 87.09 & 6.30 \\
DAT Robust   & 78.53 & 87.08 & 8.55 
             & 85.05 & 86.60 & 1.55 
             & 81.36 & 86.62 & 5.26 \\
\bottomrule
\end{tabular}
\label{tab:dat_prompt_robustness_celeb}
\end{table}

\begin{table}[H]
\centering
\caption{Group robustify prompting evaluation for \textsc{DAT*} on the CelebA.}
\begin{tabular}{lccc ccc ccc}
\toprule
\multirow{2}{*}{Method} & 
\multicolumn{3}{c}{ViT-B/32} & 
\multicolumn{3}{c}{ViT-L/14} & 
\multicolumn{3}{c}{ResNet-50} \\
\cmidrule(lr){2-4} \cmidrule(lr){5-7} \cmidrule(lr){8-10}
& WG & Avg & Gap
& WG & Avg & Gap
& WG & Avg & Gap \\
\midrule
DAT*         & 78.53 & 87.11 & 8.58 
             & 84.93 & 86.54 & 1.61 
             & 78.53 & 88.29 & 9.76 \\
DAT* Robust  & 79.10 & 87.20 & 8.10 
             & 85.06 & 86.61 & 1.55 
             & 81.92 & 86.85 & 4.93 \\
\bottomrule
\end{tabular}
\label{tab:dat_star_prompt_robustness_celb}
\end{table}

\begin{table}[H]
\centering
\footnotesize
\begin{adjustbox}{max width=\linewidth}
\begin{tabular}{ll}
\toprule
\textbf{Female} & \textbf{Male} \\ 
\midrule
\texttt{A photo of a female}   & \texttt{A photo of a male}   \\
\texttt{A photo of a woman}   & \texttt{A photo of a man}    \\
\texttt{A photo of a lady} & \texttt{A photo of a gentleman}       \\
\texttt{A photo of a girl}  & \texttt{A photo of a boy}      \\
\bottomrule
\end{tabular}
\end{adjustbox}
\caption{Spurious prompts used for robust prompt prompt evaluation of the CelebA dataset.}
\label{tab:female_male_synonyms}
\end{table}

\begin{table}[H]
\centering
\begin{adjustbox}{max width=\linewidth}
\begin{tabular}{ll}
\toprule
\textbf{Black Hair - Female} & \textbf{Blonde Hair - Male} \\ 
\midrule
\texttt{a photo of a female celebrity with dark hair} & \texttt{a photo of a male celebrity with blonde hair}   \\
\texttt{a photo of a woman celebrity with dark hair} & \texttt{a photo of a man celebrity with blonde hair}   \\
\texttt{a photo of a lady celebrity with dark hair} & \texttt{a photo of a gentleman celebrity with blonde hair}       \\
\texttt{a photo of a girl celebrity with dark hair} & \texttt{a photo of a boy celebrity with blonde hair}      \\
\bottomrule
\toprule
\textbf{Black Hair - Male} & \textbf{Blonde Hair - Female} \\
\midrule
\texttt{a photo of a female celebrity with dark hair} & \texttt{a photo of a male celebrity with blonde hair}   \\
\texttt{a photo of a female celebrity with dark hair} & \texttt{a photo of a woman celebrity with blonde hair}   \\
\texttt{a photo of a gentleman celebrity with dark hair} & \texttt{a photo of a lady celebrity with blonde hair}       \\
\texttt{a photo of a boy celebrity with dark hair} & \texttt{a photo of a girl celebrity with blonde hair}      \\
\bottomrule
\end{tabular}
\end{adjustbox}
\caption{Group prompts used for robust prompt evaluation of the CelebA dataset.}
\label{tab:female_male_synonyms2}
\end{table}

\section{Discussion on Text Prompts}
\label{Rebutalladded}
\subsection{Different Spurious Text Prompt Templates}
\label{ablationrebuttal1}
In addition to the specific wording of the spurious feature itself, the structure of the prompt template can also influence performance. To examine this effect more thoroughly, we evaluated all methods under two template formats, using T1: “\{spurious feature label\}’’, and T2: “A photo with a spurious feature, \{spurious feature label\}’’ on the Waterbirds dataset and CLIP ViT-B/32. We exclude \textsc{TIE} and \textsc{DAT} from this analysis because they assume access to spurious labels and therefore do not respond to changes in spurious textual formulation. As shown in Table~\ref{tab:template_ablation}, performance remains stable across both formats for \textsc{DAT*}, which achieves the highest worst-group and average accuracy. This suggests that \textsc{DAT*} is robust to prompt phrasing.

\begin{table}[H]
\centering
\caption{Effect of prompt template variation on Waterbirds with CLIP ViT-B/32.}
\label{tab:template_ablation}
\begin{tabular}{lccc|ccc}
\toprule
\multirow{2}{*}{\textbf{Method}} & \multicolumn{3}{c|}{T1 Spurious Template} & \multicolumn{3}{c}{T2 Spurious Template} \\
\cmidrule(lr){2-4} \cmidrule(lr){5-7}
 & WG & Avg & Gap & WG & Avg & Gap \\
\midrule
ZS              & 41.37 & 64.48 & 27.11 & 41.37 & 64.48 & 27.11 \\
Group Prompt    & 43.46 & 66.79 & 23.33 & 43.46 & 66.79 & 23.33 \\
Ideal Words     & 61.99 & 78.87 & \textbf{16.88} & 60.28 & 79.20 & 18.92 \\
PerceptionCLIP  & 23.37 & 61.54 & 38.17 & 59.78 & \textbf{82.50} & 22.72 \\
ROBOSHOT        & 44.35 & 69.03 & 24.68 & 54.41 & 71.92 & \underline{17.51} \\
TIE*            & 56.14 & 75.00 & 18.86 & 61.24 & 76.91 & \textbf{15.67} \\
DAT*   & \textbf{64.49} & \textbf{82.74} & \underline{18.25} & \textbf{64.02} & \underline{82.33} & 18.31 \\
\bottomrule
\end{tabular}
\end{table}

\subsection{Text Prompts Effect Evaluation}
\label{ablationrebuttal2}
Identifying strategies to build reliable and generalisable prompts is an open challenge. To examine this further, we conducted a series of experiments evaluating how different prompt formats and levels of object specificity affect DAT performance. To investigate prompt design, we decompose each text prompt into two parts: a \emph{template} and an \emph{object term}, following prior work~\citet{lu2025mitigating}. We apply the following templates for prompts:

\begin{itemize}
    \item T1: A photo with a \texttt{[Object]} background
    \item T2: A photo with a spurious feature, \texttt{[Object]}
    \item T3: \texttt{[Object]}
\end{itemize}

Building on the findings of \citet{ge2023improving}, which show that object labels exhibit a semantic hierarchy as captured in WordNet \citep{fellbaum1998wordnet}, we investigate three strategies for selecting object terms in our study: (i) using the immediate parent node in the hierarchy to represent a broader category, (ii) using the spurious feature itself, and (iii) averaging the embeddings of five highly specific sibling terms from the bottom of the hierarchy to capture fine-grained detail. Table~\ref{tab:object_terms} lists the candidate terms. This setup allows us to evaluate which level of abstraction yields the most effective prompts. This design allows us to probe how the specificity or abstraction level of the object term impacts model predictions. We conduct these experiments using the DAT* method with CLIP ViT-L/14, which demonstrated strong baseline performance. 

\begin{table}[H]
\centering
\caption{Object term variants based on WordNet hierarchy.}
\label{tab:object_terms}
\begin{tabular}{lll}
\toprule
Granularity & Water Background & Land Background \\
\midrule
O1 (Hypernyms) & fluid & ground \\
O2 (Original) & water & land \\
O3 (Hyponyms) & sea water, lake water, & farmland, forest land, \\
              & river water, stream water, creek water & arable land, grassland, desert land \\
\bottomrule
\end{tabular}
\end{table}

\begin{table}[H]
\centering
\caption{Prompt structure ablation on Waterbirds with CLIP ViT-L/14 using DAT*.}
\label{tab:wordnet_ablation}
\begin{tabular}{lccc}
\toprule
\textbf{Prompt} & \textbf{WG} & \textbf{Avg} & \textbf{Gap} \\
\midrule
T1 + O1 & 67.44 & 87.02 & 19.58 \\
T1 + O2 & 79.75 & 87.87 & 8.12 \\
T1 + O3 & 80.09 & 85.76 & 5.67 \\
T2 + O1 & 67.60 & 86.00 & 18.04 \\
T2 + O2 & 79.91 & 86.95 & 7.04 \\
T2 + O3 & 80.18 & 85.47 & 5.29 \\
T3 + O1 & 68.69 & 85.90 & 17.21 \\
T3 + O2 & 82.09 & 87.81 & 5.72 \\
T3 + O3 & 80.37 & 89.06 & 8.69 \\
\bottomrule
\end{tabular}
\end{table}

From Table~\ref{tab:wordnet_ablation}, we observe that prompt phrasing (T1–T3) has a modest influence, with \textsc{DAT*} demonstrating resilience to syntactic variations. However, the object term plays a larger role: fine-grained and semantically aligned terms (O3) consistently cause higher WG accuracy, while overly broad descriptors like hypernyms (O1) degrade performance. These findings reinforce the importance of context-aware and precise descriptions for zero-shot robustness.

%\subsection{Density Estimation Selection}

% \section{Usage of Large Language Models} We used OpenAI’s GPT-4 and GPT-5 models, with a limited capacity, for language editing and polishing of the manuscript text, as well as for generating synonymous variants of spurious attributes, as described in the paper. The models were not involved in developing ideas, designing methods, and analysing results.
%%%%%%%%%%%%%%%%%%%%%%%%%%%%%%%%%%%%%%%%%%%%%%%%%%%%%%%%%%%%%%%%%%%%%%%%%%%%%%%
%%%%%%%%%%%%%%%%%%%%%%%%%%%%%%%%%%%%%%%%%%%%%%%%%%%%%%%%%%%%%%%%%%%%%%%%%%%%%%%

\end{document}